\newcommand{\independent}{\rotatebox[origin=c]{90}{$\models$}}
\newtheorem{theorem}{Theorem}[section]
\newtheorem{lemma}[theorem]{Lemma}
\newtheorem{remark}{Remark}[section]
\newcommand{\bsmat}{\begin{bmatrix} }
\newcommand{\esmat}{\end{bmatrix} }
\begin{document}

\title{\bf C-OPH: Improving the Accuracy of One Permutation Hashing (OPH) with Circulant Permutations}

\author{\textbf{Xiaoyun Li and Ping Li} \\\\
Cognitive Computing Lab\\
Baidu Research\\
10900 NE 8th St. Bellevue, WA 98004, USA\\
  \texttt{\{xiaoyunli,\ liping11\}@baidu.com}
}
\date{\vspace{0.1in}}
\maketitle

\begin{abstract}
\noindent Minwise hashing (MinHash) is a classical method for efficiently estimating the Jaccrad similarity in massive binary (0/1) data. To generate $K$ hash values for each data vector, the standard theory of MinHash requires $K$ independent permutations. Interestingly, the recent work on ``circulant MinHash'' (C-MinHash)~\citep{CMH2Perm2021} has shown that merely two permutations are needed. The first permutation breaks the structure of the data and the second permutation is re-used $K$ time in a circulant manner. Surprisingly, the estimation accuracy of C-MinHash is proved to be strictly smaller than that of the original MinHash.  The more recent work~\citep{CMH1Perm2021} further demonstrates that practically only one permutation is needed. Note that those two papers~\citep{CMH1Perm2021,CMH2Perm2021} are different from the well-known work on ``One Permutation Hashing (OPH)'' published in NIPS'12~\citep{Proc:Li_Owen_Zhang_NIPS12}. OPH and its variants using different ``densification'' schemes are popular alternatives to the standard MinHash. The densification step is necessary in order to deal with empty bins which exist in One Permutation~Hashing.\\

\noindent In this paper, we propose to incorporate the essential ideas of C-MinHash to improve the accuracy of One Permutation Hashing. Basically, we develop a new densification method for OPH, which achieves the  smallest estimation variance compared to all existing densification schemes for OPH. Our proposed method is named C-OPH (Circulant OPH). After the initial permutation (which breaks the existing structure of the data), C-OPH only needs a ``shorter'' permutation of length $D/K$ (instead of $D$), where $D$ is the original data dimension and $K$ is the total number of bins in OPH. This short permutation is re-used in $K$ bins in a circulant shifting manner. It can be shown that the estimation variance of the Jaccard similarity is strictly smaller than that of the existing (densified) OPH methods. \\

\noindent Additionally, our work leads to an interesting and useful consequence. That is, if we neglect the cost of the initial permutation needed by the bin-dividing step, then we actually just need ``$1/K$'' permutation instead of one permutation. It turns out that the initial permutation can be safely replaced by an approximate permutation such as 2-universal (2U) hashing, as verified by an extensive empirical study.
\end{abstract}

\newpage

\section{Introduction}  \label{sec:intro}

With the explosive growth in the scale of data, efficient methods for search, data storage, and large-scale machine learning have become increasingly important. The method of minwise hashing (MinHash)~\citep{Proc:Broder,Proc:Broder_STOC98,Article:Li_Konig_CACM11} is a classical and popular hashing algorithm for binary 0/1 data. The hashed values  of two data points $\bm v,\bm w\in\{0,1\}^D$, which are non-negative integers, have collision probability equal to the Jaccard similarity (resemblance) between the two data points defined as
\begin{align} \label{def:J}
    J(\bm v,\bm w)=\frac{\sum_{i=1}^D \mathbbm 1\{v_i=w_i=1\}}{\sum_{i=1}^D \mathbbm 1\{v_i+w_i\geq 1\}}.
\end{align}

When using MinHash in practice,  in order to ensure reliable accuracy, one typically needs to generate hundreds to thousands hash values per data point depending on the applications.
We use $K$ to denote the target number of hashes (per data point). Operationally, to achieve a strict performance guarantee that matches the theory, the standard MinHash requires $K$ independent permutations, with each hash value being the minimal permuted index of the non-zero elements of the data vector.

Over the past decades, MinHash and variants have been widely used in numerous applications,  for similarity estimation, approximate nearest neighbor search, duplicate detection, clustering, classification, image retrieval, database systems,  etc.~\citep{Proc:Broder_WWW97,Proc:Charikar_STOC02,Proc:Fetterly_WWW03,Proc:google_WWW07,Proc:Buehrer_WSDM08,Proc:Bendersky_WSDM09,Proc:Lee_ECCV10,Proc:Li_NIPS11,Proc:Deng_CIKM12,Proc:Chum_CVPR12,Proc:Shrivastava_ECML12,Proc:malware_KDD14,Proc:Zhu_VLDB17,Proc:Nargesian_VLDB18,Proc:Wang_KDD19,Proc:Lemiesz_VLDB21,Proc:Tseng_SIGMOD21,Proc:Feng_SIGMOD21,Proc:Jia_SIGMOD21}.

Here, we should also mention that, in the very early development of minwise hashing,  only one permutation was used by storing the first $K$ non-zero locations after the permutation~\citep{Proc:Broder,Proc:Broder_WWW97}. Later \citet{Proc:Li_Church_EMNLP05} proposed maximum likelihood estimators to significantly improve the estimation accuracy, and  \citet{Article:Li_Church_CL07} further extended the method to estimating three-way and multi-way associations. However,
because the hashed values in~\citet{Proc:Broder,Proc:Broder_WWW97,Proc:Li_Church_EMNLP05,Article:Li_Church_CL07}  did not form a metric space (i.e., satisfy the triangle inequality), they could not be used for numerous important applications that require metric space.

\subsection{Circulant Minwise Hashing (C-MinHash)}

Recently, \cite{CMH2Perm2021} proposes a convenient variant of MinHash that rigorously reduces the number of independent permutations required to merely 2. In the so-called Circulant MinHash method, noted as C-MinHash-$(\sigma,\pi)$, the first permutation $\sigma$ is used for initially permuting the data (pre-processing), and the second (independent) permutation $\pi$ is used for generating the hash values, repeatedly for $K$ times by circulant shifts. For example, if $\pi=[3,2,4,1]$, then $\pi_{\rightarrow 1}=[1,3,2,4]$ is the permutation shifted rightwards by 1 element, and so on. C-MinHash uses $\pi_{\rightarrow 1}$, $\pi_{\rightarrow 2}$, ..., to generate the hash values, instead of the independent permutations as in the standard MinHash. The concrete procedure is summarized in Algorithm~\ref{alg:C-MinHash}.

\begin{algorithm}[h]
{
	\textbf{Input:} Binary data vector $\bm v\in\{0,1\}^D$, \hspace{0.1in} Permutation vectors $\pi$ and $\sigma$: $[D]\rightarrow[D]$

	\textbf{Output:} Hash values $h_1(\bm v),...,h_K(\bm v)$
	
	\vspace{0.05in}

    Initial permutation: $\bm v'$ = $\sigma(\bm v)$

	For $k=1$ to $K$

	\hspace{0.2in}Shift $\pi$ circulantly rightwards by $k$ units: $\pi_k=\pi_{\rightarrow k}$

	\hspace{0.2in}$h_k(\bm v)\leftarrow \min_{i:v_i'\neq 0} \pi_{\rightarrow k}(i)$

	End For

	}\caption{C-MinHash-$(\sigma,\pi)$}
	\label{alg:C-MinHash}
\end{algorithm}

Surprisingly, it was proved in~\citet{CMH2Perm2021} that using merely 2 permutations, the Jaccard estimator of C-MinHash has uniformly smaller variance than that of the classical MinHash (with $K$ independent permutations). Moreover, in a followup work~\citep{CMH1Perm2021}, they further verified by extensive empirical results that actually one can more conveniently use merely one permutation, i.e., simply letting $\sigma = \pi$ in Algorithm~\ref{alg:C-MinHash}. That is, one can use $\pi$ for both initialization (pre-processing) and circulant hashing.  While the dependence between initial permutation and those for hashing makes the theoretical analysis very complicated, the authors of~\citep{CMH1Perm2021} provided the exact expression of the mean estimation of C-MinHash-$(\pi,\pi)$. Even though the Jaccard estimator of C-MinHash-$(\pi,\pi)$ is biased, the bias is too small to have any noticeable impact on the overall mean square error (MSE), where MSE = variance + bias$^2$.

Practically speaking, C-MinHash provides a convenient strategy for the design of hashing systems. For instance, when the data space is not ultra large (e.g., $\leq 2^{30}$, a billion features), we can simply save one ``permutation vector'' to generate all $K$ hash values. A vector of length $2^{30}$ can  be easily stored, even in GPU memory. However, saving thousands of such permutations  might be still unrealistic and wasteful. The benefit of using exact permutations, instead of approximation by hash functions, is that the empirical performance would always match the theory. We will elaborate more on this point later in the paper.

\subsection{One Permutation Hashing (OPH)} \label{intro:OPH}

While C-MinHash methods need to use only one or two permutations, it should be clear that they are different from the work on ``one permutation hashing (OPH)'' and its many variants of ``densification'' schemes. The idea of OPH~\citep{Proc:Li_Owen_Zhang_NIPS12} is to randomly divide the data vector of size $D$ into $K$ equal-sized bins (by applying a random permutation $\sigma$). Then, minwise hashing is applied in each bin, producing $K$ hash values from $K$ bins. An illustrative example is presented in Figure~\ref{fig:OPH example}. The theoretical correctness of this approach is built upon the proof that, the expected ``bin-wise'' Jaccard similarity in each simultaneously non-empty bin is the same as the true $J$ of the whole data vectors $\bm v,\bm w$~\citep{Proc:Li_Owen_Zhang_NIPS12}.

\begin{figure}[h]
\centering
	\includegraphics[width=4in]{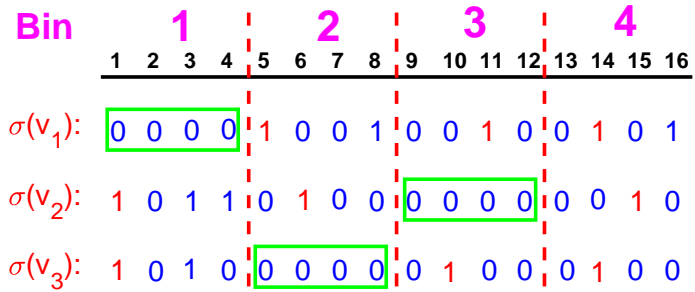}
	\caption{A toy example of One Permutation Hashing (OPH) provided by ~\cite{Proc:Li_NIPS19_BCWS} on three binary vectors, $v_1,v_2,v_3$. The data vector $\bm v$ is first randomly split into $K=4$ bins, $\mathcal B_1, ..., \mathcal B_4$, by a random permutation $\sigma$. Then, the hash value is taken as the smallest non-zero index within each bin (effectively doing a minwise hashing within the bin). If a bin is empty, we record ``$E$''. For instance, $h(v_1)=[E,5,11,14]$.}
	\label{fig:OPH example}
\end{figure}

Note that,  as shown in Figure~\ref{fig:OPH example}, OPH really only needs one permutation (if the number of hash values is smaller than $K$). After the data vector is permuted by $\sigma$ and broken into $K$ bins, the  ``small permutation'' within each bin is already completed. Nevertheless, for the convenience of explaining our proposed idea in this paper, we can still conceptually view OPH as a ``two-permutation'' scheme. That is, the initial permutation $\sigma$ is applied and the data vector is divided after the permutation. A second permutation $\pi$ is broken evenly into $K$ small permutations and each small permutation is used in one bin to conduct minwise hashing within the bin. As proved in~\citet{Proc:Li_Owen_Zhang_NIPS12}, for OPH, we can simply let $\pi=\sigma$.  This interpretation would be helpful to understand the densification and re-randomization procedures which will soon be introduced.

There  are possibly empty bins, especially for relatively sparse data vectors, as reflected in Figure~\ref{fig:OPH example}.   To tackle the problem of empty bins, \citet{Proc:Shrivastava_ICML14,Proc:Shrivastava_UAI14} developed two ``densification'' schemes, which were later improved by~\citet{Proc:Shrivastava_ICML17,Proc:Li_NIPS19_BCWS}, among other works.

\newpage

\subsection{Summary of Contributions}

In this work, we seek to improve the estimation accuracy of the existing OPH framework as well its densification methods, by incorporating the essential idea of circulant MinHash~\citep{CMH2Perm2021,CMH1Perm2021}. After the initial bin-dividing process, we use a ``small'' permutation of size $D/K$, where $D$ is the data dimension and $K$ is the number of bins. Without loss of generality, we  assume $D/K$ is an integer (otherwise we can always pad zeros). This permutation is re-used in all bins in a circulant shifting manner. We derive the precise variance formula of the Jaccard estimator, and show that C-OPH reduces the prior known optimal Jaccard estimation variance of densified OPH~\citep{Proc:Li_NIPS19_BCWS}. Numerical results are provided to validate the theory.

In addition to achieving improved accuracy, there is also another beneficial consequence. That is, if we neglect the cost of the initial permutation for the bin-dividing procedure, we actually only need ``$1/K$ permutation'' to generate $K$ hash values, instead of using one permutation of size $D$.  It turns out that the initial permutation can be safely replaced by an approximation such as the 2-universal (2U) hashing and other approximate hashing methods such as 4-universal (4U) hashing. Consider the original Circulant MinHash with two permutations C-MinHash-$(\sigma,\pi)$. If we replace the initial permutation $\sigma$ by 2U hashing, it becomes C-MinHash-$(2U,\pi)$. Likewise, if we replace the second permutation $\pi$ by 2U hashing, we obtain C-MinHash-$(\sigma,2U)$. Our experimental study has shown that  C-MinHash-$(2U,\pi)$ achieves essentially the same accuracy as C-MinHash-$(\sigma,\pi)$. The accuracy of C-MinHash-$(\sigma,2U)$, however, can be much worse than C-MinHash-$(\sigma,\pi)$. in certain datasets.

The above observation provides the intuition for developing C-OPH-$(2U,\pi/K)$. That is, in Circulant OPH, we use the 2U hashing for the initial permutation and use one single (small) permutation of size $D/K$ to generate hashes from the bins, in a circulant shifting fashion. This strategy would allow practitioners to be able to use C-OPH-$(2U,\pi/K)$ in datasets of much higher dimensions.  For example, consider a dataset with $D = 2^{40}$ and we let $K = 2^{10}$. Then we just need a small permutation of size $D/K = 2^{30}$ for this very high dimensional dataset, and $2^{30}$ is small enough  even for the GPU memory.

\section{Background: Densified One Permutation Hashing (OPH)} \label{sec:pre}

We first provide the details about OPH and its densification schemes. The generic framework of OPH is provided in Algorithm~\ref{alg:OPH}, where $[K]$ denotes the set $\{1,...,K\}$. For illustration, we present ReDen (Re-randomized Densified), the most recent and theoretically the most accurate densified OPH~\citep{Proc:Li_NIPS19_BCWS}. Here, we consider a flexible implementation of OPH to allow the number of bins $K$ and the number of hash values $M$ to be different. This is a convenient and practical setting where one may hope to continue generating more than $K$ hashes after each bin has contributed one hash.

\begin{algorithm}[tb]
{   \setcounter{AlgoLine}{0}
	\textbf{Input:} Binary data vector $\bm v\in\{0,1\}^D$; Number of bins $K$; Number of hash values $M$
	
	\textbf{Output:} OPH hashes $h_1(\bm v),...,h_M(\bm v)$
	
	\vspace{0.05in}
	
	Use a permutation $\sigma$ to randomly split $[D]$ into $K$ equal-size bins $\mathcal B_1,...,\mathcal B_K$, $\mathcal B_i=\{j:(i-1)\frac{D}{K}+1\leq \sigma(j)\leq i\frac{D}{K}\}$
	
	Assign each hash an independent permutation $\pi^{(i)}:[D/K]\mapsto [D/K]$, $i\in [M]$
	
	For $k=1$ to $M$
	
	\hspace{0.2in}{\color{red} Select a bin $i_k\in [K]$ by some strategy (option (i) or (ii)) \hfill //First scan bin selection}
	
	\hspace{0.2in}If $\mathcal B_{i_k}$ is not empty
	
	\hspace{0.4in}$h_k(\bm v)\leftarrow \min_{j\in\mathcal B_{i_k}:v_j\neq 0} \pi^{(k)}(\sigma(j)-(i_k-1)\frac{D}{K})+(i_k-1)\frac{D}{K}$ \hfill //MinHash within non-empty bin
	
	\hspace{0.2in}Else
	
	\hspace{0.4in}$h_k(\bm v)=E$  \hfill //Empty bin
	
	\hspace{0.2in}End If
	
	End For
	
	For $k=1$ to $M$
	
	\hspace{0.2in}If $h_k(\bm v)=E$
	
	\hspace{0.4in}{\color{brown}Select a non-empty bin $i_{k}'\in [K]$ \hfill  //Densification bin selection}
	
	\hspace{0.4in}{\color{blue} $h_k(\bm v)\leftarrow \min_{j\in\mathcal B_{i_{k}'}:v_j\neq 0} \pi^{(k)}(\sigma(j)-(i_k'-1)\frac{D}{K})+(i_{k}'-1)\frac{D}{K}$ \hfill  //Re-randomized Densification}
	
	\hspace{0.2in}End If
	
	End For
}
\caption{Generic One Permutation Hashing (OPH) with Re-randomized Densification (ReDen)}
\label{alg:OPH}
\end{algorithm}

The general procedure of ReDen is as follows:

\begin{enumerate}
    \item \textbf{Bin split and permutation generation:} use a permutation $\sigma$ to randomly split the data vector $\bm v$ into $K$ bins, where $\mathcal B_i=\{j:(i-1)\frac{D}{K}+1\leq \sigma(j)\leq i\frac{D}{K}\}$. Assign each hash $k=1, ..., M$ an independent ``bin-wise'' permutation $\pi^{(k)}:[D/K]\mapsto [D/K]$.

    \item \textbf{First scan:} for $k=1,...,M$, select a bin $i_k\in [K]$ by some strategy. If bin $i_k$ is non-empty, set $h_k(\bm v)=\displaystyle{\min_{j:j\in \mathcal B_{i_k},v_j\neq 0}\pi^{(k)}(\sigma(j)-(i_k-1)\frac{D}{K}})+(i_k-1)\frac{D}{K}$, where the addition is to recover the original index to prevent accidental collision; otherwise, set $h_k(\bm v)=E$.

    \item \textbf{Second scan and Re-randomized Densification:} We do another screening over hash samples $h_k$, $k=1,..., M$: if the $k$-th hash is empty (``$E$''), find a non-empty bin $\mathcal B_{i_{k}'}$. Densify the $k$-th hash by $h_k(\bm v)=\displaystyle{\min_{j:j\in \mathcal B_{i_{k}'},v_j\neq 0}\pi^{(k)}(\sigma(j)-(i_k'-1)\frac{D}{K})+(i_k'-1)\frac{D}{K}}$.
\end{enumerate}

Note that, in the densification process, ReDen applies independent permutation $\pi^{(k)}\independent \pi^{(k')}$ to get the hash sample for an empty bin $k$. That said, the ``local'' permutations used within bins for producing the hash for each $h_k$ are all independent. Moreover, as mentioned in Section~\ref{intro:OPH}, the permutation $\sigma$ used for random bin split also implies the independent bin-wise permutations $\pi^{(1)},..., \pi^{(M)}$ as required, when $M\leq K$. This is because, for the permutation $\sigma$ on $[D]$, its ``partial'' sub-permutations split by consecutive bins (local order) are also perfectly random and independent. Therefore, for the $k$-th hash, applying $\sigma$ for bin split implicitly accomplishes the role of applying an additional bin-wise permutation $\pi^{(k)}$. Consequently, in practical implementation, when $M\leq K$, we only need one long permutation $\sigma$, which achieves bin split and implies all the bin-wise short permutations. When $M>K$, we will need additional $(M-K)$ ``bin-wise'' permutations for the purpose of re-randomization.

\subsection{Bin Selection Strategies}

In Algorithm~\ref{alg:OPH}, two important ingredients are the first scan bin selection and the densification bin selection, as depicted by line 6 and line 15, respectively.  Regarding the first scan bin selection (line 6), there are two available strategies:
\begin{enumerate}[(i)]
    \item We strictly pick the $K$ bins one by one. For hash value $h_k$ with $k>K$, we apply a rotation strategy, i.e., to pick the $[mod(k-1,K)+1]$-th bin.

    \item For every $k\in [M]$, we uniformly randomly choose a bin out of $K$ bins.
\end{enumerate}
Strategy (i) can be regarded as fixing the number of times each bin is chosen, while strategy (ii) is more flexible. Strategy (i) typically has smaller Jaccard estimation variance than (ii)~\citep{Proc:Li_NIPS19_BCWS}. Therefore, in the remainder, we will stick to strategy (i) as the scanning bin selection for all the analysis and simulations.

For the densification bin selection strategy (line 15), the earlier work proposed to rotationally select a nearest non-empty bin in a clock-wise direction. Later, \cite{Proc:Shrivastava_ICML17} proposed to uniformly randomly select non-empty bins for densification through 2-universal hashing, which is also adopted in~\cite{Proc:Li_NIPS19_BCWS}. The exact implementation is more complicated in practice, and we refer interested readers to the aforementioned papers for more details. It is known that the uniform sampling strategy would lead to reduced Jaccard estimation variance. Therefore, in our paper, we will focus on this densification bin selection routine. Note that, the main contribution of this work, which is the improvement brought by using circulation in OPH, is valid for all types or combinations of bin selection methods.

\subsection{Densification Strategies}

Line 16 of Algorithm~\ref{alg:OPH} is another important component of OPH, which is the strategy to generate the (densified) hash value once the non-empty bin is selected. In ReDen, the re-randomization procedure proposes to re-do an independent bin-wise permutation. For example, if the hash $h_k$ is empty (i.e., $h_k(\bm v)=E$) and the $j$-th bin is chosen for densifying $h_k$, then $\pi^{(k)}$ is applied to the data in bin $\mathcal B_j$ to get $h_k(\bm v)$. In earlier works, e.g.,~\cite{Proc:Shrivastava_ICML14,Proc:Shrivastava_ICML17}, for densification, the hash value is directly copied from the non-empty bin to the empty bin, i.e., $h_k(\bm v)\leftarrow h_j(\bm v)$. However, this strategy leads to larger estimation variance especially on highly sparse data, since the densification of many empty bins would output many identical hash values. Intuitively speaking, empty bins would not provide much useful additional information of the data. Overall, the ReDen method as presented in Algorithm~\ref{alg:OPH} with option (i) for bin selection and re-randomized densification is the most recent framework that achieves the smallest Jaccard similarity estimation variance among all densified OPH schemes.

\section{C-OPH: Circulant One Permutation Hashing}

In this section, we present our proposed C-OPH method, inspired by the idea of circulant minwise hashing.

\begin{algorithm}[b!]
{   \setcounter{AlgoLine}{0}
	\textbf{Input:} Binary data vector $\bm v\in\{0,1\}^D$; Number of bins $K$; Number of hash values $M$
	
	\hspace{0.5in}Permutation $\sigma: [D]\mapsto [D]$, $\pi: [D/K]\mapsto [D/K]$
	
	\textbf{Output:} C-OPH hashes $h_1(\bm v),...,h_M(\bm v)$
	
	\vspace{0.05in}
	
	Use $\sigma$ to randomly split $[D]$ into $K$ equal-size bins $\mathcal B_1,...,\mathcal B_K$, $\mathcal B_i=\{j:(i-1)\frac{D}{K}+1\leq \sigma(j)\leq i\frac{D}{K}\}$

	For $k=1$ to $M$
	
    \hspace{0.2in}{\color{red} Select a bin $i_k=mod(k-1,K)+1$ by option (i)}

	\hspace{0.2in}If $\mathcal B_{i_k}$ is not empty
	
	\hspace{0.4in}$h_k(\bm v)\leftarrow \min_{j\in\mathcal B_{i_k}:v_j\neq 0} \pi_{\rightarrow k}(\sigma(j)-(i_k-1)\frac{D}{K})+(i_k-1)\frac{D}{K}$ \hfill //MinHash within non-empty bin

	\hspace{0.2in}Else
	
	\hspace{0.4in}$h_k(\bm v)=E$  \hfill //Empty bin
	
	\hspace{0.2in}End If
	
	End For
	
	For $k=1$ to $M$
	
	\hspace{0.2in}If $h_k(\bm v)=E$
	
	\hspace{0.4in}{\color{brown}Select a non-empty bin $i_{k}'\in [M]$ uniformly at random}
	
	\hspace{0.4in}{\color{blue} $h_k(\bm v)\leftarrow \min_{j\in\mathcal B_{i_{k}'}:v_j\neq 0} \pi_{\rightarrow k}(\sigma(j)-(i_k'-1)\frac{D}{K})+(i_k'-1)\frac{D}{K}$ \hfill  //C-OPH Densification}
	
	\hspace{0.2in}End If
	
	End For
}
\caption{Circulant One Permutation Hashing (C-OPH-$(\sigma,\pi)$)}
\label{alg:C-OPH}
\end{algorithm}

\begin{figure}[tb]
\centering
	\includegraphics[width=4.7in]{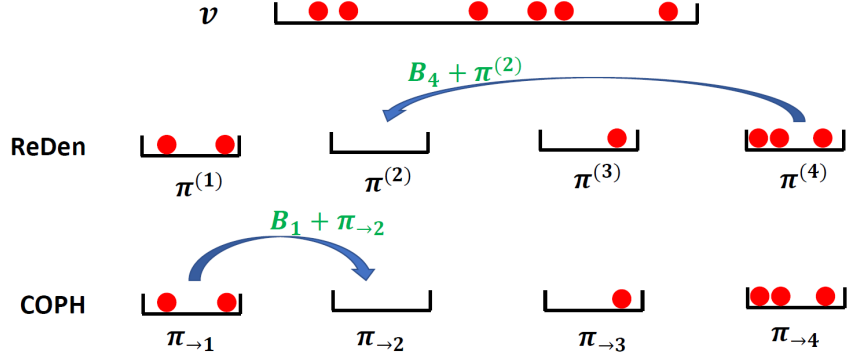}
	\caption{An illustration of the proposed C-OPH strategy. The data vector $\bm v$ is first randomly split into $K=4$ bins. For ReDen~\citep{Proc:Li_NIPS19_BCWS}, we use independent permutation $\pi^{(1)}$, ..., $\pi^{(4)}$ to perform MinHash within each bin. For C-OPH, we use circulant permutations $\pi_{\rightarrow 1}$, ..., $\pi_{\rightarrow 4}$ instead. In this toy example, the 2nd bin $\mathcal B_2$ is empty. In both approaches, we randomly select a non-empty bin ($\mathcal B_4$ for ReDen, and $\mathcal B_1$ for C-OPH), and min-hash the data in that bin with the permutation associated with $\mathcal B_2$.}
	\label{fig:COPH example}
\end{figure}

\subsection{Approach: An Improved Densification Scheme}

As shown in Algorithm~\ref{alg:C-OPH}, the proposed Circulant One Permutation Hashing (C-OPH) method admits the same ``bin-wise hashing + densification'' protocol as ReDen. The key difference is the bin-wise permutations used for hashing. Recall that in Algorithm~\ref{alg:OPH}, to generate each hash $h_k(\bm v)$, we use an independent length-$D/K$ permutation $\pi^{(k)}:[D/K]\mapsto [D/K]$, $k=1,...,M$. In the proposed C-OPH protocol, we indeed only need one length-$D/K$ permutation, $\pi$, which is used circulantly in the same spirit as in C-MinHash. That is, we use $\pi_{\rightarrow k}$ to generated the $k$-th hash, for both the first scan and the densification. We provide an illustrative example in Figure~\ref{fig:COPH example} of C-OPH. Here, to make sure that no circulant permutation is repeatedly used, we assume that $KM\leq D$, which is also needed for establishing rigorous theoretical results.

\subsection{Theoretical Analysis}

We prove that the prior-known minimal Jaccard estimation variance among OPH methods, which is achieved by ReDen, can be further reduced by the proposed C-OPH. For conciseness, we assume $M=K$ and $K^2\leq D$. For both ReDen (Algorithm~\ref{alg:OPH}) and C-OPH (Algorithm~\ref{alg:C-OPH}), the Jaccard estimator between two binary data vectors $\bm v,\bm w\in\{0,1\}^D$ is in the form of average hash collisions,
\begin{align}  \label{eqn:J estimator}
    \hat J(\bm v,\bm w)=\frac{1}{K}\sum_{k=1}^K \mathbbm 1\{h_k(\bm v)=h_k(\bm w)\}.
\end{align}
Denote $\hat J_{ReD}$ and $\hat J_{COPH}$ as the estimator resulting from Algorithm~\ref{alg:OPH} and Algorithm~\ref{alg:C-OPH} (both with same bin selection options as in Algorithm~\ref{alg:C-OPH}), respectively. Suppose the length of each bin $d=D/K$ is an integer. Throughout the paper, we denote
$$a = \sum_{i=1}^D\mathbbm 1\{ \bm v_i = 1 \text{ and }  \bm w_i = 1\}, \quad
f = \sum_{i=1}^D\mathbbm 1\{ \bm v_i = 1 \text{ or }  \bm w_i = 1\}.$$
The following conditional probability regarding the bin split would be needed for our analysis.

\begin{lemma} \label{lemma:a' f'}
Let $d=D/K$. Define function $H$ with the following recursion: $\forall 1\leq k\leq K$, integer $n>0$,
\begin{equation*}
    H(k,n|d)=\sum_{j=1\vee (n-(k-1)d)}^{d \wedge (n-k+1) }{d\choose j}H(k-1,n-j|d),\hspace{0.1in} H(1,n|d)={d\choose n}.
\end{equation*}
Then, conditional on the event that Algorithm~\ref{alg:C-OPH} has $m$ non-empty bins, denote $\tilde a=\sum_{i\in \mathcal B_k}\mathbbm 1\{v_i+w_i=2\}$, $\tilde f=\sum_{i\in \mathcal B_k}\mathbbm 1\{v_i+w_i=1\}$ in a non-empty bin $\mathcal B_k$. For $0\leq a'\leq \min\{a,d\}$, $1\leq f'\leq \min\{f,d\}$, $a'\leq f'$, we have the conditional distribution as
\begin{align}
    P\left[\tilde a=a',\tilde f=f'|m \right]=\frac{{d\choose f'}H(m-1,f-f'|d)}{H(m,f|d)}\cdot\frac{{a\choose a'}{f-a\choose f'-a'}}{{f\choose f'}}. \label{eqn:a' f' dist}
\end{align}
\end{lemma}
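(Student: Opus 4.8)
The plan is to decompose the joint conditional probability $P[\tilde a = a', \tilde f = f' \mid m]$ into two independent-looking factors: one governing how the $f$ ``active'' coordinates (those with $v_i + w_i \geq 1$) are distributed across the $K$ bins, and one governing, among the $f'$ active coordinates that land in a fixed non-empty bin $\mathcal B_k$, how many of them are ``agreements'' ($v_i + w_i = 2$). The key observation is that after the initial permutation $\sigma$, the placement of the $f$ active coordinates into $K$ bins of size $d$ is exchangeable and uniform over all valid assignments, and conditioning on ``exactly $m$ bins are non-empty'' only reweights which assignment patterns are allowed — it does not touch the second-stage question of how agreements versus disagreements are allocated among the $f'$ slots within a given bin.

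First I would establish the role of the function $H$. I claim $H(k,n \mid d)$ counts the number of ways to distribute $n$ indistinguishable active coordinates into $k$ \emph{non-empty} bins each of capacity $d$, weighted by the multinomial $\prod \binom{d}{j_i}$ that records which actual positions inside each bin are occupied — i.e. $H(k,n\mid d) = \sum \prod_{i=1}^k \binom{d}{j_i}$ over all $(j_1,\dots,j_k)$ with $1 \leq j_i \leq d$ and $\sum j_i = n$. The stated recursion is exactly the convolution identity for this sum: peel off the last bin, which receives $j \in [1 \vee (n-(k-1)d),\, d \wedge (n-k+1)]$ coordinates (the bounds ensure the remaining $k-1$ bins can be filled nonempty within capacity), contributing $\binom{d}{j} H(k-1, n-j \mid d)$; the base case $H(1,n\mid d) = \binom{d}{n}$ is immediate. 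This gives the combinatorial meaning I need.

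Next I would compute the conditional distribution of $\tilde f = f'$ in a fixed non-empty bin $\mathcal B_k$ given $m$ non-empty bins total. The total number of configurations with $m$ non-empty bins (up to the common overcounting factor $\binom{K}{m}$ for \emph{which} $m$ bins are nonempty, which cancels) is proportional to $H(m, f \mid d)$. Those in which the distinguished bin $\mathcal B_k$ has exactly $f'$ active coordinates is $\binom{d}{f'} H(m-1, f-f' \mid d)$: choose which $f'$ of the $d$ slots in $\mathcal B_k$ are active, then distribute the remaining $f - f'$ active coordinates among the other $m-1$ non-empty bins. Hence $P[\tilde f = f' \mid m] = \binom{d}{f'} H(m-1, f-f'\mid d)/H(m, f\mid d)$, the first factor in \eqref{eqn:a' f' dist}. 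For the second factor, conditional on exactly $f'$ of the active coordinates being in $\mathcal B_k$, each subset of size $f'$ of the $f$ active coordinates is equally likely (symmetry of $\sigma$), so the number of agreements among them is hypergeometric: $P[\tilde a = a' \mid \tilde f = f', m] = \binom{a}{a'}\binom{f-a}{f'-a'}/\binom{f}{f'}$, since $a$ of the $f$ active coordinates are agreements. Multiplying the two yields \eqref{eqn:a' f' dist}. The one point requiring care — and what I expect to be the main obstacle — is justifying rigorously that the hypergeometric second stage is genuinely conditionally independent of the event $\{m \text{ non-empty bins}\}$ once we condition on $\tilde f = f'$ in $\mathcal B_k$; this follows because ``which coordinates are agreements'' is a property of the \emph{labeled} active coordinates that is invariant under relabeling, while the non-emptiness pattern depends only on the \emph{counts} per bin, so after conditioning on the count $f'$ in $\mathcal B_k$ the two are independent by exchangeability. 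Writing this cleanly (e.g. via a two-step uniform sampling model for $\sigma$) is the delicate bookkeeping step, but it is conceptually routine.
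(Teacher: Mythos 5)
Your proposal is correct and follows essentially the same route as the paper: factor the joint as $P[\tilde a=a'\mid \tilde f=f',m]\,P[\tilde f=f'\mid m]$, identify the second factor via the $H$-function counting and the first as hypergeometric by exchangeability of the permutation. The only difference is that the paper simply cites Lemma~1 of \cite{Proc:Li_NIPS19_BCWS} for the marginal of $\tilde f$, whereas you re-derive it from the combinatorial meaning of $H$; your extra care about why the hypergeometric stage is independent of the non-emptiness event is a correct elaboration of what the paper leaves implicit.
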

\begin{proof}
By Lemma 1 of \cite{Proc:Li_NIPS19_BCWS}, we know that conditional on the event that there are $m$ non-empty bins (denoted as ``$m$'' in the formulas for simplicity), the marginal distribution of $\tilde f$ follows
\begin{align*}
    P\left[\tilde f=f'|m \right]=\frac{{d\choose f'}H(m-1,f-f'|d)}{H(m,f|d)}.
\end{align*}
Thus, the joint distribution can be characterized by
\begin{align*}
    P\left[\tilde a=a',\tilde f=f'|m \right]&=P[\tilde a=a'|\tilde f=f',m]P[\tilde f=f'|m]\\
    &=\frac{{d\choose f'}H(m-1,f-f'|d)}{H(m,f|d)}\cdot\frac{{a\choose a'}{f-a\choose f'-a'}}{{f\choose f'}},
\end{align*}
due to the fact that given $\tilde f$ within a non-empty bin, $\tilde a$ follows a hyper-geometric distribution Hyper($f,a,f',a'$).
\end{proof}

The variance of $\hat J_{COPH}$ is provided as below.

\begin{theorem} \label{theo:variance}
Denote $d=D/K$. Assume $K^2\leq D$ and $f\leq \frac{K-1}{K}D$. Let $N_{emp}=K-m$ be the number of empty bins after the bin split, $J=\frac{a}{f}$ and $\tilde J=\frac{a-1}{f-1}$. We have
\begin{align*}
    Var[\hat J_{COPH}]&=\mathbb E\left[KJ+N_{emp}(2K-N_{emp}-1) E_1 +(K-N_{emp})(K-N_{emp}-1)J\tilde{J} \right],
\end{align*}
with
\begin{align*}
    E_1&=\frac{1}{m}\sum_{(a',f')\in\Omega(m)}\tilde{\mathcal E}(a',f',d)p(a',f'|m)+\frac{m-1}{m}\tilde J J,
\end{align*}
where $\Omega(m)$ is the set of all possible $(\tilde a,\tilde f)$ defined as in Lemma~\ref{lemma:a' f'} within a non-empty bin, and $p(a',f'|m)$ is given by (\ref{eqn:a' f' dist}). Additionally, for any $1\leq a'\leq f'\leq d$,
\begin{align}\notag
    &\tilde{\mathcal E}(a',f',d)=\sum_{\{l_0,l_2,g_0,g_1\}}\Bigg\{\left(\frac{l_0}{f'+g_0+g_1}+\frac{a(g_0+l_2)}{(f'+g_0+g_1)f'}\right) \\\label{eqn:E} &\hspace{0.5in}\times\sum_{s=l}^{d-f'-1} \frac{\binom{d-f'}{s}\binom{f'-a'-1}{d-f'-s-1}\binom{s}{n_1}\binom{d-f'-s}{n_2}\binom{d-f'-s}{n_3}\binom{f'-a'-(d-f-s)}{n_4}\binom{a'-1}{a'-l_1-l_2}}{\binom{d-a'-1}{d-f'-1}\binom{d-1}{a'}}\Bigg\}, \notag
\end{align}
where
\begin{align*}
    &n_1=g_0-(d-f'-s-g_1), \hspace{0.33in} n_2=d-f'-s-g_1,\\
    &n_3=l_2-g_0+(d-f'-s-g_1), \hspace{0.08in} n_4=l_1-(d-f'-s-g_1),
\end{align*}
and the feasible set of $\{l_0,l_2,g_0,g_1\}$ satisfies the intrinsic constraints that for non-negative $l|_0^2,g|_0^2,h|_0^2$,
\begin{align*}
    &l_0+l_1+l_2=l_0+g_0+h_0=a',\\
    &g_0+g_1+g_2=l_1+g_1+h_1=d-f',\\
    &h_0+h_1+h_2=l_2+g_2+h_2=f'-a'.
\end{align*}
In particular, it holds that $Var[\hat J_{COPH}]<Var[\hat J_{ReD}]$.
\end{theorem}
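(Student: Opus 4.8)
The plan is to establish the variance formula for $\hat J_{COPH}$ by a careful conditional-expectation argument, and then compare term-by-term with the analogous formula for $\hat J_{ReD}$. First I would condition on the outcome of the initial permutation $\sigma$, which determines the bin configuration: the number of empty bins $N_{emp}=K-m$, and, in each non-empty bin, the local counts $(\tilde a,\tilde f)$ whose joint law is given by Lemma~\ref{lemma:a' f'}. Once $\sigma$ is fixed, the only remaining randomness is the single short permutation $\pi$ together with the uniform densification bin choices, and $\hat J_{COPH}=\frac1K\sum_k \mathbbm 1\{h_k(\bm v)=h_k(\bm w)\}$. Writing $X_k=\mathbbm 1\{h_k(\bm v)=h_k(\bm w)\}$, I would expand
\begin{align*}
Var[\hat J_{COPH}] = \frac{1}{K^2}\Big(\sum_k \mathbb E[X_k] + \sum_{k\neq k'}\mathbb E[X_kX_{k'}] - \big(\textstyle\sum_k \mathbb E[X_k]\big)^2\Big),
\end{align*}
and note that since each $h_k$ individually is an unbiased MinHash within some bin, $\mathbb E[X_k]=J$ so the single-index sum contributes $KJ$ and the squared mean contributes $-K^2J^2$; the work is entirely in the cross terms $\mathbb E[X_kX_{k'}]$.

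The cross terms split into three cases according to the bins assigned to $h_k$ and $h_{k'}$ after densification. (a) Both hashes land (via first scan or densification) in the \emph{same} bin; because circulant shifts $\pi_{\rightarrow k}$ and $\pi_{\rightarrow k'}$ restricted to that single length-$d$ block act as two \emph{distinct} (but dependent) permutations of a $d$-element set, the probability that the two MinHash values collide \emph{simultaneously} is governed by the quantity $\tilde{\mathcal E}(a',f',d)$ in~\eqref{eqn:E}; this is exactly the C-MinHash-style ``two shifted permutations on one block'' computation, and averaging over the local $(\tilde a,\tilde f)$ via~\eqref{eqn:a' f' dist} and over which bin is hit produces the $E_1$ expression. (b) The two hashes land in \emph{different} non-empty bins; here the relevant restrictions of $\pi$ are to disjoint blocks (this is where $KM\le D$, i.e. $K^2\le D$, is used — no shift is reused and the blocks do not overlap), and one can argue the collisions are conditionally independent given $\sigma$, each contributing a factor that averages to $J\tilde J$ (the $\tilde J=\frac{a-1}{f-1}$ appearing because conditioning on one bin's collision depletes the pool). (c) Combinatorial bookkeeping of how many ordered pairs $(k,k')$ fall into each case: with option (i) bin selection and uniform densification, the count of same-bin pairs is $N_{emp}(2K-N_{emp}-1)$ and the count of distinct-non-empty-bin pairs is $(K-N_{emp})(K-N_{emp}-1)$, matching the stated formula. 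Collecting these and taking $\mathbb E[\cdot]$ over $\sigma$ gives the displayed expression for $Var[\hat J_{COPH}]$.

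For the strict inequality $Var[\hat J_{COPH}]<Var[\hat J_{ReD}]$, I would derive the corresponding formula for ReDen by the identical decomposition: the only structural difference is that ReDen uses \emph{independent} permutations $\pi^{(k)}$, so in case (a) — two hashes densified into the same non-empty bin — the two MinHash values become \emph{conditionally independent} given that bin's $(\tilde a,\tilde f)$, contributing $\mathbb E[\tilde a/\tilde f]^2$-type terms rather than the joint-collision $\tilde{\mathcal E}$. Thus $Var[\hat J_{ReD}]$ has the same shape but with $E_1$ replaced by some $E_1^{ReD}$ in which the $\tilde{\mathcal E}(a',f',d)p(a',f'|m)$ sum is replaced by the ``independent'' analogue. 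The heart of the matter is the pointwise comparison showing the C-OPH same-bin collision term is smaller: for two shifted copies of a single permutation on a $d$-set, the event of a \emph{double} collision is less likely than under two independent permutations — intuitively because the circulant coupling forces the two hash locations to move together, reducing the chance both simultaneously pick matching elements. I expect this pointwise inequality $\tilde{\mathcal E}(a',f',d)\le \big(\text{independent term}\big)$, uniformly in $(a',f',d)$ with strictness somewhere, to be the main obstacle, since $\tilde{\mathcal E}$ is an unwieldy multi-index sum; the cleanest route is probably to recognize $\tilde{\mathcal E}$ as a probability of an explicitly describable event about $\pi$ and its shift, and then exhibit a coupling or a direct counting argument against the product of marginals — essentially importing and adapting the variance-reduction lemma of~\citet{CMH2Perm2021} to the single-block setting. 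Given that inequality, the remaining terms ($KJ$ and the distinct-bin $J\tilde J$ terms) are identical across the two methods, so the strict inequality follows after taking expectation over $\sigma$, noting that empty bins occur with positive probability so the $E_1$-vs-$E_1^{ReD}$ gap is actually realized.
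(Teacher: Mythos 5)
Your overall strategy coincides with the paper's: the same decomposition of $Var[\hat J]$ into a diagonal term $KJ$, cross terms involving at least one densified (empty) bin that aggregate into $N_{emp}(2K-N_{emp}-1)E_1$, and cross terms between distinct non-empty bins contributing $(K-N_{emp})(K-N_{emp}-1)J\tilde J$; the same conditioning on $m$ and on the local counts $(\tilde a,\tilde f)$ via Lemma~\ref{lemma:a' f'}; the same identification of the same-target double-collision probability with $\tilde{\mathcal E}$ via Lemma 3.3 of \citet{CMH2Perm2021}; and the same appeal to Theorem 3.4 of \citet{CMH2Perm2021} for the strict inequality. So the route is the right one.

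There is, however, one step whose justification as written is false and needs the paper's actual argument. In your case (b) you claim that collision indicators from \emph{distinct} non-empty bins are ``conditionally independent given $\sigma$.'' They are not: once $\sigma$ is fixed, both hash values are deterministic functions of the \emph{single} permutation $\pi$ (through two different circulant shifts), so the two indicators are coupled. What is true---and what the paper isolates as the key fact---is that the \emph{expectation} of the product $C_i^NC_j^N$ under circulant reuse of $\pi$ equals the independent-permutation value $J\tilde J$, and this holds only after averaging over the uniformly random bin split; it is an identity of expectations, not a conditional independence statement. Relatedly, you locate the use of $K^2\le D$ in case (b) (``blocks do not overlap''), but the hypothesis is really needed in case (a) and in the final inequality: $K\le d$ guarantees that the shifts $\pi_{\rightarrow k}$, $k=1,\dots,K$, of the length-$d$ permutation are pairwise distinct, which is the precondition for applying the C-MinHash two-shift collision formula $\tilde{\mathcal E}$ and its variance-reduction theorem within a single bin. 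Finally, a bookkeeping slip: $N_{emp}(2K-N_{emp}-1)$ is not the count of ``same-bin pairs'' but of ordered pairs with at least one empty bin; the probability $1/m$ of the two hashes actually landing on the same target bin is already folded into $E_1$ (whose complementary $\frac{m-1}{m}\tilde JJ$ term covers the miss), which is consistent with your displayed formula but not with your description of case (a).
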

\begin{remark}
In Theorem~\ref{theo:variance}, the expectation is with respect to the number of empty bins, $N_{emp}$. The exact probability distribution of this random variable is
$$Pr \left[ N_{emp} = j\right]= \sum_{\ell=0}^{K-j}(-1)^\ell {K\choose j}{K-j\choose \ell}
{D(1-(j+\ell)/K)\choose f}\bigg/{D\choose f}.$$
We refer interested readers to \cite{Proc:Li_NIPS19_BCWS} for the detailed derivation.
\end{remark}

\begin{proof}
Firstly, we separate the event of matching hash values into two distinct events. Denote $C_k^E$ the indicator of hash collision at bin $k$ when $k$ is empty, and $C_k^N$ the indicator of collision when $k$ is simultaneously non-empty. Consequently, we have $C_k=C_k^N+C_k^E$. Denote $I_{emp,k}$ as the indicator function of the $k$-th bin being empty. According to \cite{Proc:Li_Owen_Zhang_NIPS12}, for non-empty bins we have
$$\mathbb E(C_k^N|I_{emp,k}=0)=\mathbb E(C_k^E|I_{emp,k}=0)=J,$$
$$\mathbb E[(C_k^N)^2|I_{emp,k}=0]=\mathbb E[(C_k^E)^2|I_{emp,k}=0]=J.$$
Based on above notations, for both schemes we can write $\hat J=\frac{1}{K}\sum_{k=1}^K(C_k^E+C_k^N)$. For any unbiased estimator, we can expand
\begin{equation} \label{var}
Var(\hat J)=\mathbb E[\frac{1}{K^2}(\sum_{k=1}^K(C_k^E+C_k^N))^2]-J^2\triangleq \frac{1}{K^2}A-J^2.
\end{equation}
It suffices to analyze $A$. Conditional on the event that the number of non-empty bins $K-N_{emp}=m$,~we~have
$$A=\mathbb E[\mathbb E[(\sum_{k=1}^K(C_k^E+C_k^N))^2|K-N_{emp}=m]].$$
For the ease of notation, we simply let $m$ denote the event $\{K-N_{emp}=m\}$. One important fact of C-OPH is that, the expectation of $C_i^NC_j^N$ for non-empty bins $\mathcal B_i, \mathcal B_j$ where $\pi$ is used circulantly equals that of ReDen where two independent permutations are applied to $\mathcal B_i$ and $\mathcal B_j$ respectively. This is because in the algorithms, the bin split is uniform at random. Keeping this in mind, for both ReDen and C-OPH we have
\begin{align}
A&=\mathbb E[\mathbb E[(\sum_{k=1}^M(C_k^E+C_k^N))^2|m]] \nonumber\\
&=\mathbb E[\mathbb E[\sum_{k=1}^K[(C_k^E)^2+(C_k^N)^2]+\sum_{i\neq j}C_i^EC_j^E  \nonumber\\
&\hspace{1.3in} +2\sum_{i\neq j}C_i^EC_j^N+\sum_{i\neq j}C_i^NC_j^N|m]] \nonumber\\
&=\mathbb E[KJ+N_{emp}(N_{emp}-1)E_1+2N_{emp}(K-N_{emp})E_1   \nonumber\\
&\hspace{1.3in} +(K-N_{emp})(K-N_{emp}-1)J\tilde{J}]
 \nonumber\\
&=\mathbb E[KJ+N_{emp}(2K-N_{emp}-1) E_1 \nonumber \\
&\hspace{1.3in} +(K-N_{emp})(K-N_{emp}-1)J\tilde{J}], \label{var-2}
\end{align}
where $E_1=\mathbb E[C_i^EC_j^E|m]$, for $i\neq j$. Since $f\leq \frac{K-1}{K}D$, this expectation is always positive. Denote event $\Upsilon$ as bin $i$ and bin $j$ choosing the same non-empty bin in the densification bin selection procedure. Expanding the expectation yields
\begin{align*}
    E_1&=P[C_i^E=C_j^E=1|m]\\
    &=P[C_i^E=C_j^E=1|\Upsilon,m]P[\Upsilon|m]+P[C_i^E=C_j^E=1|\Upsilon^c,m]P[\Upsilon^c|m]\\
    &=\frac{1}{m}P[C_i^E=C_j^E=1|\Upsilon,m]+\frac{m-1}{m}\tilde J J.
\end{align*}
Suppose both bins select bin $k$ (non-empty). Let $\mathcal I_k$ be the index set of bin $k$. Denote $a_k=|\{i\in\mathcal I_k:v_i+w_i=2\}|$, $f_k=|\{i\in\mathcal I_k:v_i+w_i=1\}|$ as the number of two types of simultaneously non-zero elements in bin $k$. Let $\Omega(m)$ be the collection of possible $(a_k,f_k)$ conditional on $m$ non-empty bins. We proceed with
\begin{align}
    E_1&=\frac{1}{m}\sum_{(a',f')\in\Omega(m)}\Big\{P\left[C_i^E=C_j^E=1|(a_k,f_k)=(a',f'),\Upsilon,m \right]  \notag\\
    &\hspace{0.7in} \times P\left[(a_k,f_k)=(a',f')|\Upsilon,m \right]\Big\}+\frac{m-1}{m}\tilde J J.  \label{eqn:E1}
\end{align}
In (\ref{eqn:E1}), the second probability is precisely the conditional probability given by Lemma~\ref{lemma:a' f'}. The first probability can be computed as $\tilde{\mathcal E}(a',f',d)$ by Lemma 3.3 in~\cite{CMH2Perm2021}, applied to the selected non-empty bin for densification. We refer interested readers to \cite{CMH2Perm2021} for details. Aggregating parts together gives the exact formula of the variance of C-OPH Jaccard estimator.

To show that $Var[\hat J_{COPH}]$ is smaller than $Var[\hat J_{ReD}]$, notice that in (\ref{eqn:E1}), the distributions of $(a_k,f_k)$ and $m$ only depend on the bin split procedure which is shared by both ReDen and C-OPH. Therefore, using circulant permutations in fact only affects the term
\begin{align*}
    B\triangleq P[C_i^E=C_j^E=1|(a_k,f_k)=(a',f'),\Upsilon,m],
\end{align*}
for each $(a',f')$, consider bin $k$ as a length-$D/K$ data vector with within-bin Jaccard index $J'=a'/f'$. Since $K^2\leq D$, applying the arguments of Theorem 3.4 in~\cite{CMH2Perm2021} within the bin, we know that C-OPH gives smaller term $B$ than ReDen by the variance reduction of circulant permutations applied within the bin. Now combining (\ref{var}) and (\ref{var-2}) proves the desired result.
\end{proof}

Theorem~\ref{theo:variance} says that, C-OPH can further improve the prior known smallest variance of the OPH-type Jaccard estimators, which is the main theoretical merit of the proposed C-OPH. It is easy to show that this strict variance reduction holds for arbitrary $K$ and $M$ with $MK\leq D$, which is usually true in practice for high-dimensional data. Moreover, even if this condition is not present, the variance of C-OPH can still be considerably smaller. In line 8 and line 16 of Algorithm~\ref{alg:C-OPH}, we can change ``$\pi_{\rightarrow k}$'' into ``$\pi_{\rightarrow (k+\lfloor kd/D \rfloor)}$'', where the term $\lfloor kd/D \rfloor$ is to deploy an additional periodic shifting. For example, suppose $d=\frac{D}{K}=\frac{M}{2}$ (i.e., $MK=2D$). In the first $k=d=\frac{M}{2}$-th hashes, the permutation used to generate hash from, e.g., $\mathcal B_1$, is $\pi_{\rightarrow 1}$. With the shifting term, the permutation used in the last $\frac{M}{2}$ hash values for $\mathcal B_1$ would be $\pi_{k+\rightarrow \lfloor k/K \rfloor}=\pi_{\rightarrow (d+2)}$, which is different from the first permutation $\pi_{\rightarrow 1}$ used for $\mathcal B_1$. Otherwise (without this shifting term), since $\pi_{\rightarrow (d+1)}=\pi_{\rightarrow 1}$, the permutations (for $\mathcal B_1$) will repeat, leading to exactly the same hash values and consequently larger estimation variance. However, if $\mathcal B_1$ is an empty bin, then we still have $1/m$ chance of using the same permutation to generate hashes from a same non-empty bin (densification bin collision), where $m$ is the number of non-empty bins in total. This would slightly increase the variance of C-OPH from the rigorous theoretical perspective. Nonetheless, in most practical cases where $f=|\bm v\cup \bm w|$ is not very small (i.e., not extremely sparse data in the absolute scale), the impact of the densification bin collision would be negligible (e.g., see our empirical verification). To sum up, the variance reduction of C-OPH is valid rigorously when $MK\leq D$, and is true empirically  even when $MK>D$.

\begin{figure}[h]
	\begin{center}
		\mbox{\hspace{-0.1in}
		\includegraphics[width=2.25in]{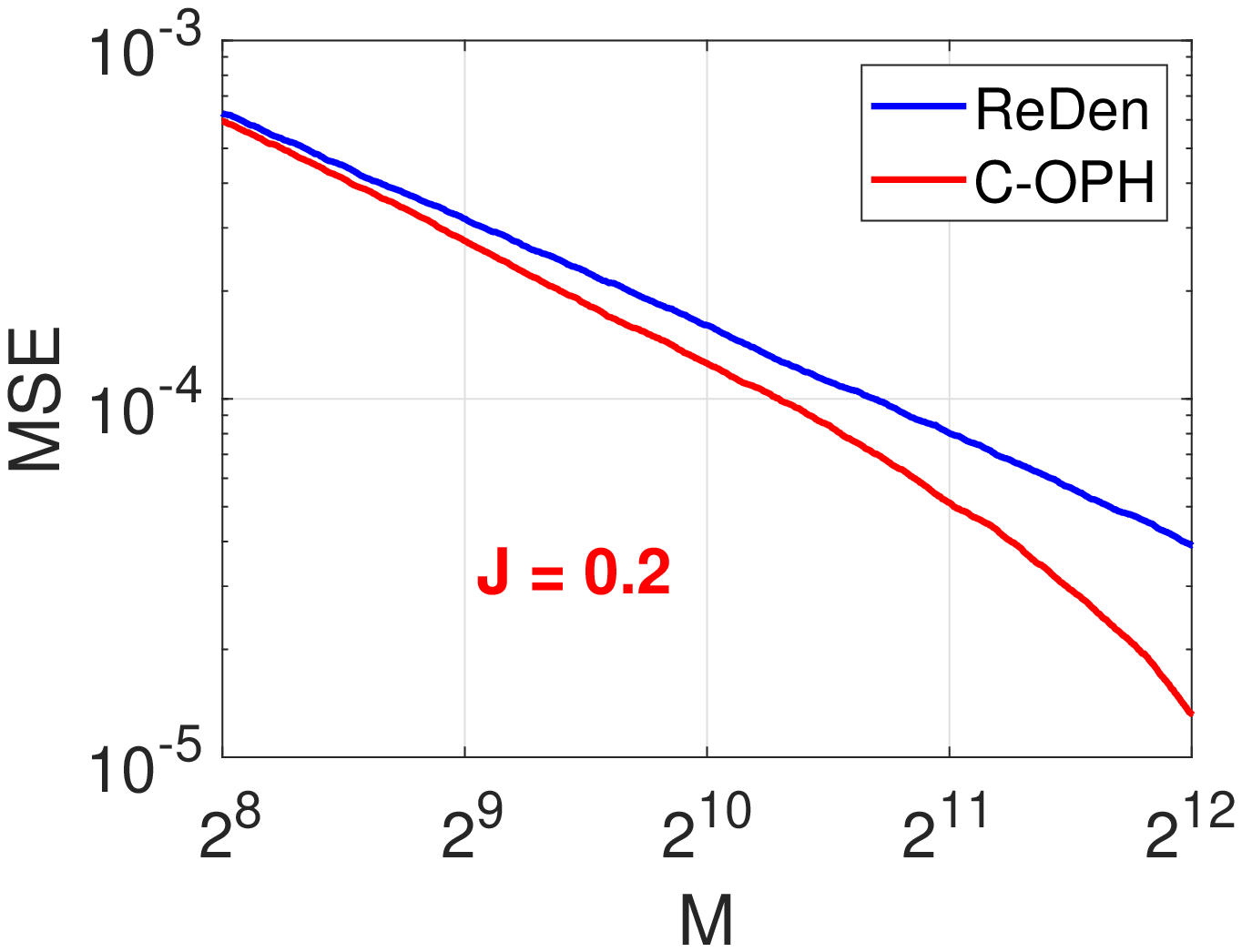}\hspace{-0.1in}
		\includegraphics[width=2.25in]{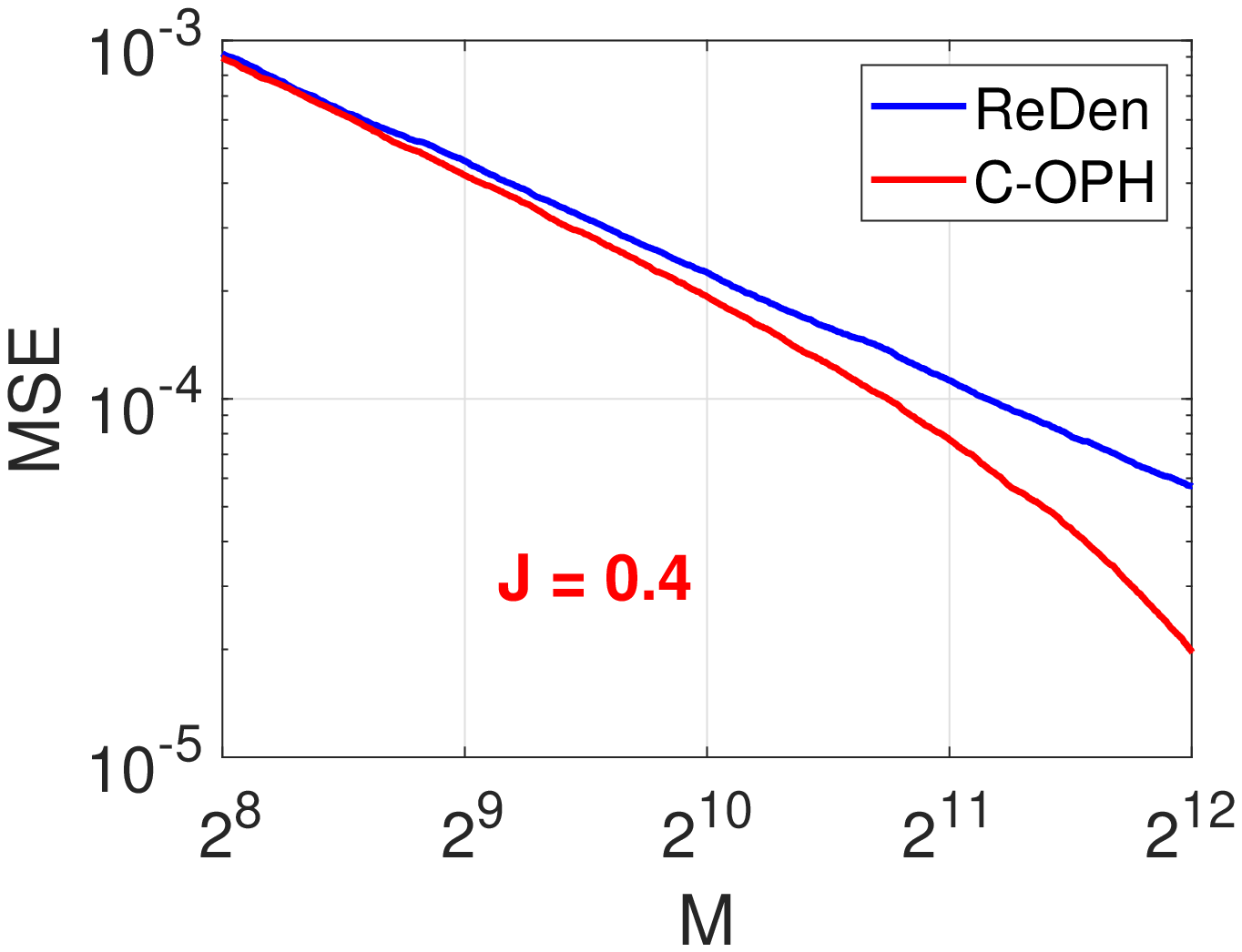}\hspace{-0.1in}
		\includegraphics[width=2.25in]{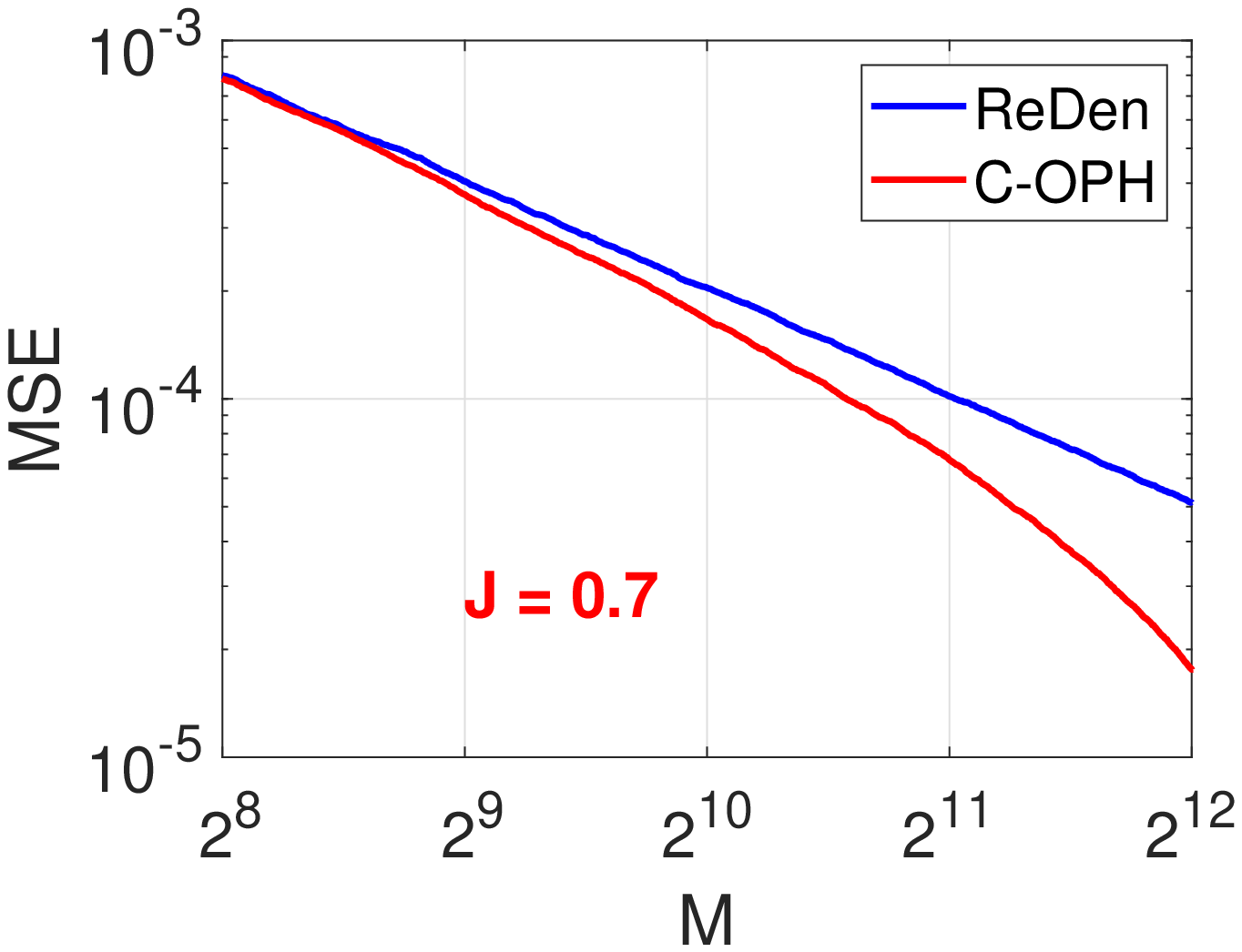}
		}
		\mbox{\hspace{-0.1in}
		\includegraphics[width=2.25in]{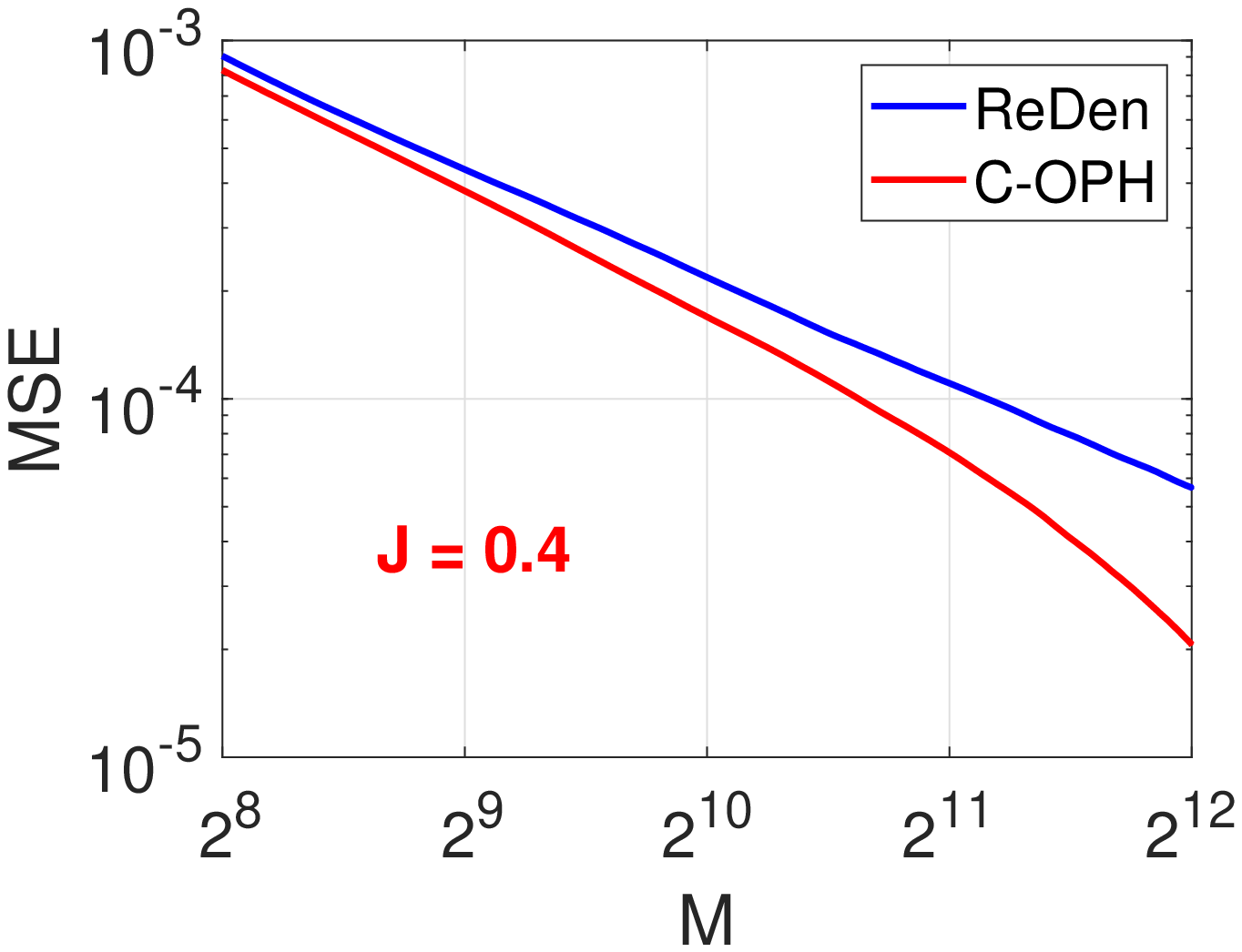}\hspace{-0.1in}
		\includegraphics[width=2.25in]{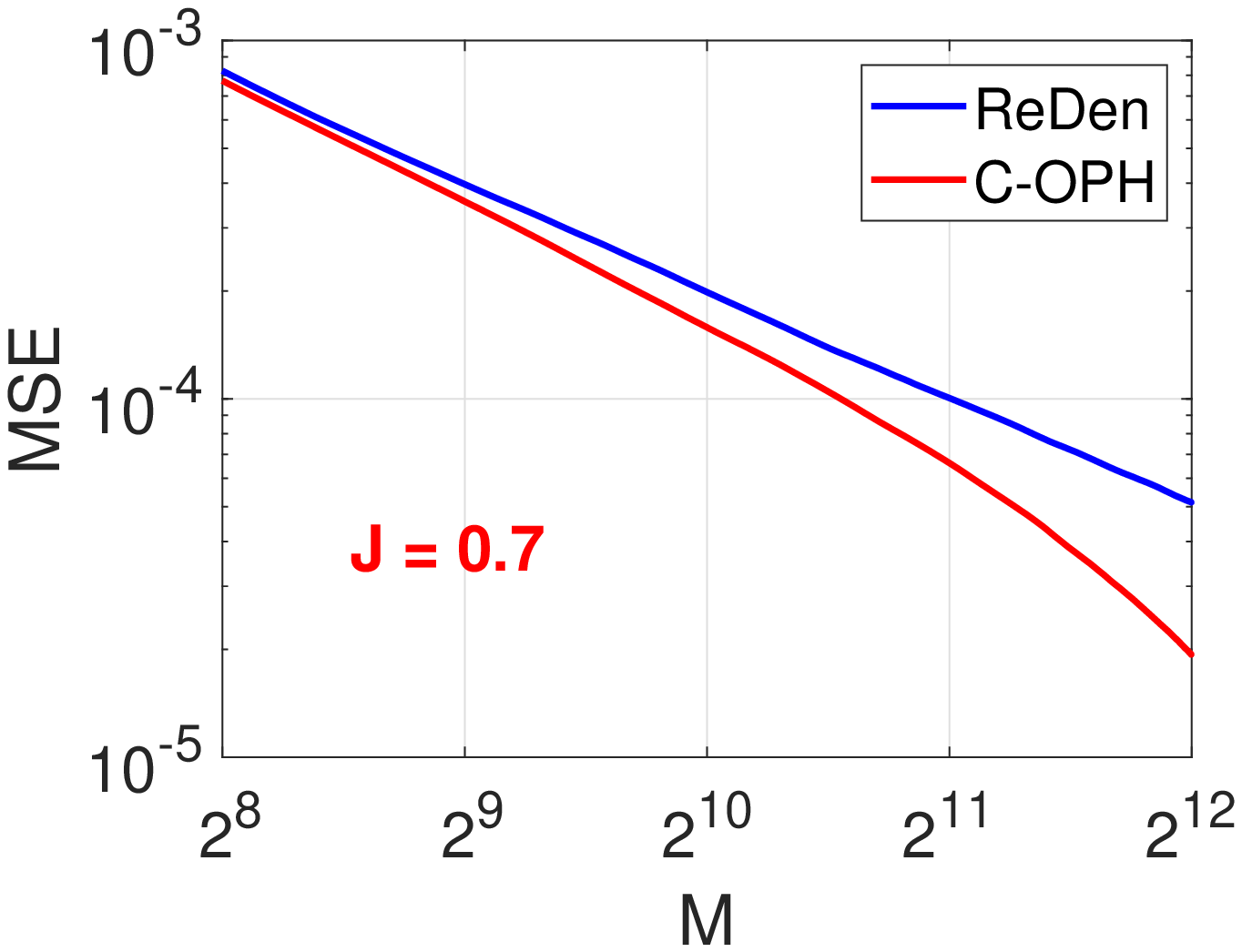}\hspace{-0.1in}
		\includegraphics[width=2.25in]{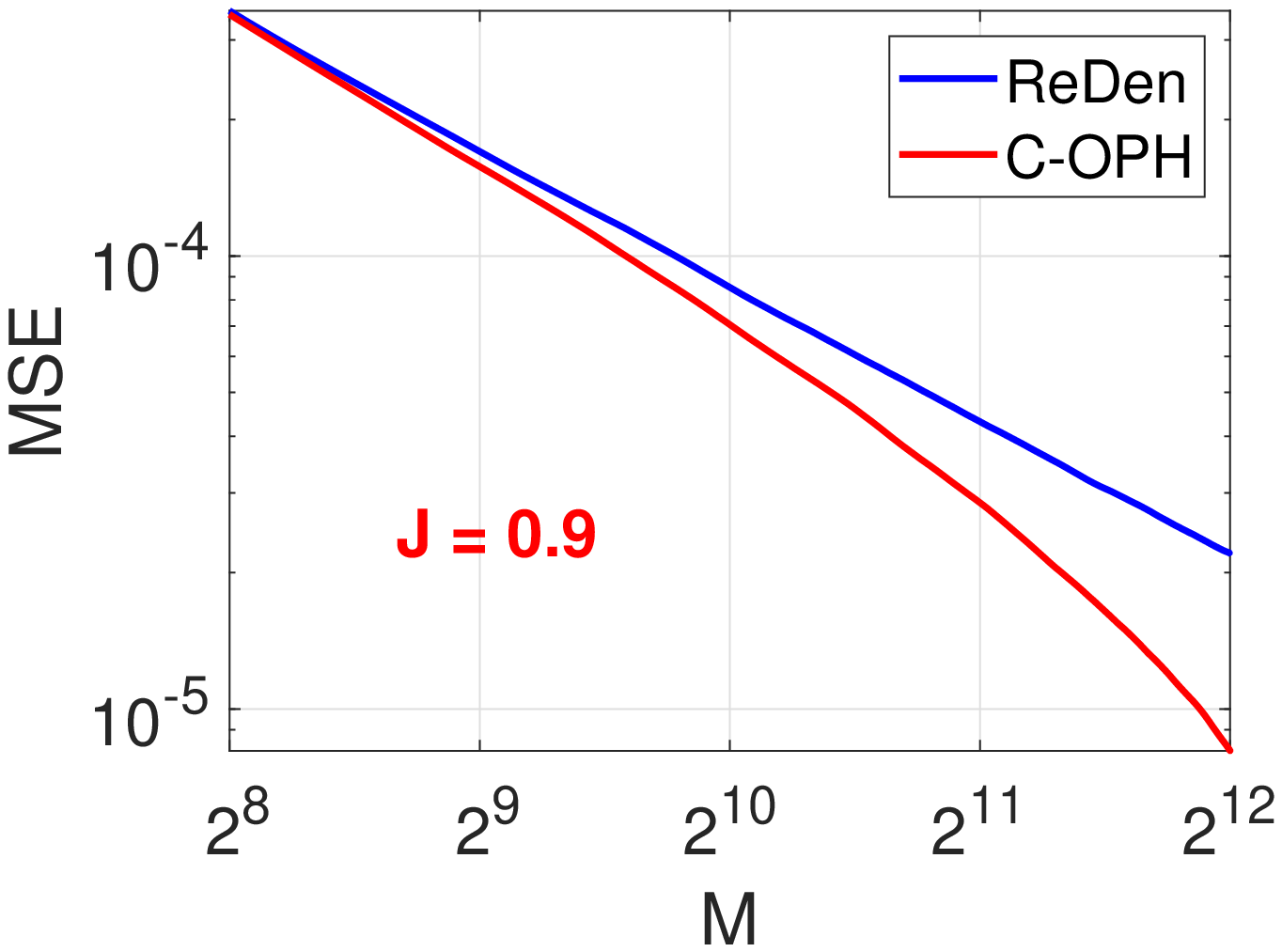}
		}
	\end{center}
	
	\vspace{-0.25in}
	
	\caption{Empirical Mean Squared Error (MSE) of simulated data pairs, $D=2^{12}$, $f=|\bm v\cup \bm w|=2^{11}$, with various $J$. \textbf{1st row:} $K=2^5$. \textbf{2nd row:} $K=2^7$. We confirm that the MSE of C-OPH is smaller than that of ReDen in all cases.}
	\label{fig:MSE COPH sim}	\vspace{-0.5in}
\end{figure}

In Figure~\ref{fig:MSE COPH sim}, we plot the empirical MSE of C-OPH versus ReDen on several synthetic binary data pairs to demonstrate the variance reduction effect and justify the theory. Here we simulate data vectors with various dimensionality $D$ and Jaccard similarity values. The bin split is realized by a random permutation on all data elements. We validate that C-OPH can indeed incur smaller estimation variance than ReDen. More numerical examples on real-world data can be found in Section~\ref{sec:practice} (Figure~\ref{fig:MSE COPH words K128} and~\ref{fig:MSE COPH words K512}).

\section{Practical Implementation}  \label{sec:practice}

So far, our discussion has assumed that we have access to a uniformly random bin split. One can certainly achieve this by doing a random permutation on the data vector, and then split non-zero elements into different bins based on their permuted location. In practice, however, applying exact permutations might not be desirable when the data dimensionality is extremely large, due to the high memory consumption to store the ``permutation vector''. Instead, using hash functions to approximate random permutations is a common choice, which waves the need to store explicitly the permutation. In this section, we provide some discussion on the practical implementation of C-OPH, to show how hash functions can be effectively used in C-OPH. Recall that, C-MinHash~\citep{CMH2Perm2021,CMH1Perm2021} can reduce the $K$ permutations used in standard MinHash to merely one permutation, allowing the efficient usage of one ``permutation vector'' such that the practical estimation always matches the theory. Interestingly, when combined with OPH, our proposed C-OPH can further reduce the number of permutations used to just ``$1/K$'', practically speaking.

\subsection{When shall we use hash functions?}

In many applications involving very high ($D$) dimensional data, storing exact random permutations may be too costly. A simple alternative is to use simple hash functions to approximate the permutations. It is well-known that, in some cases, simply replacing the permutations with hash functions in MinHash may lead to undesirable and systematic estimation inconsistencies. However, interestingly as shown in Figure~\ref{fig:MSE CMH words}, for C-MinHash-$(\sigma,\pi)$, we can safely replace the first permutation $\sigma$ by simple 2-universal (2U) hashing.

\begin{figure}[h]

\begin{center}

		\mbox{\hspace{-0.1in}
		\includegraphics[width=2.25in]{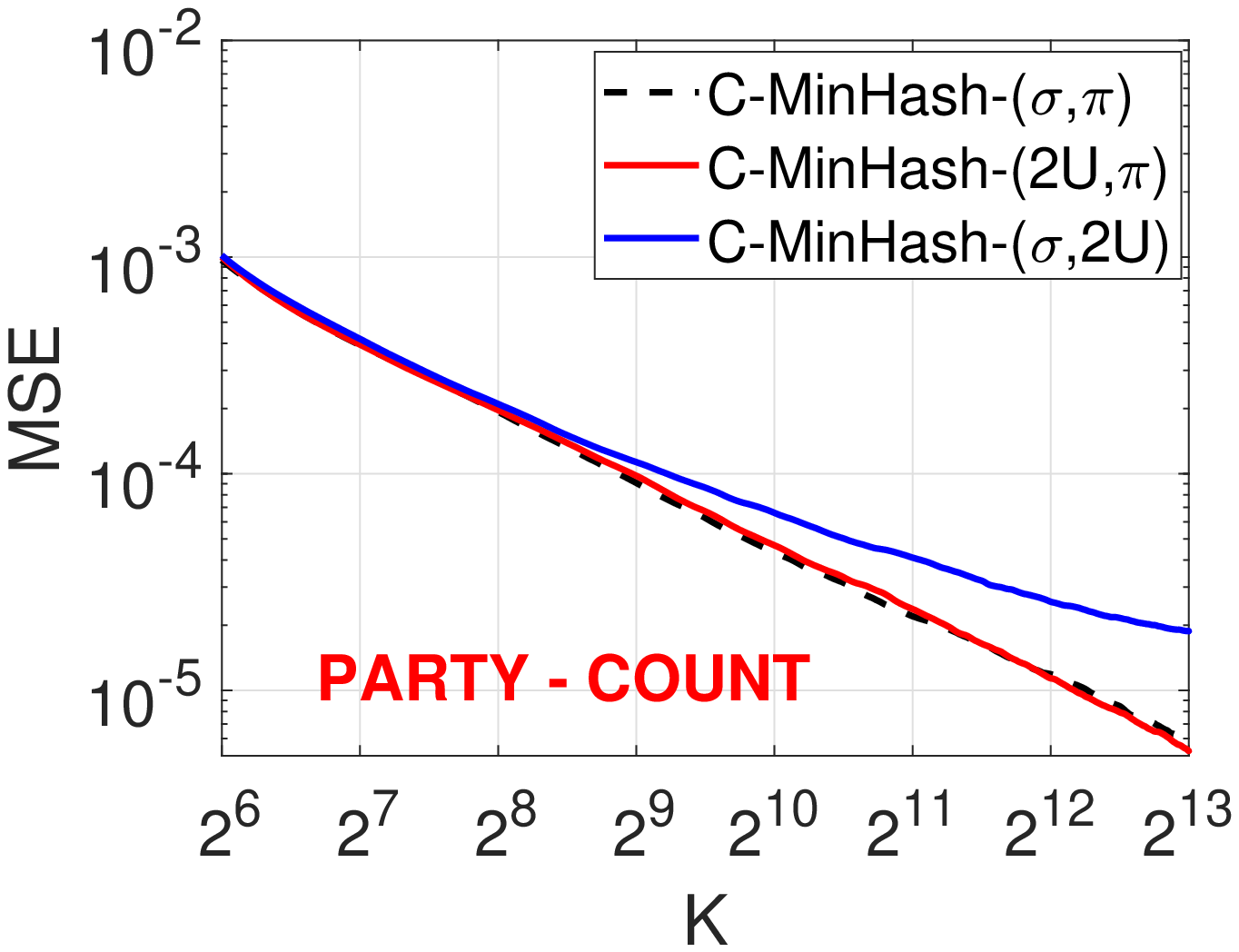}\hspace{-0.1in}
		\includegraphics[width=2.25in]{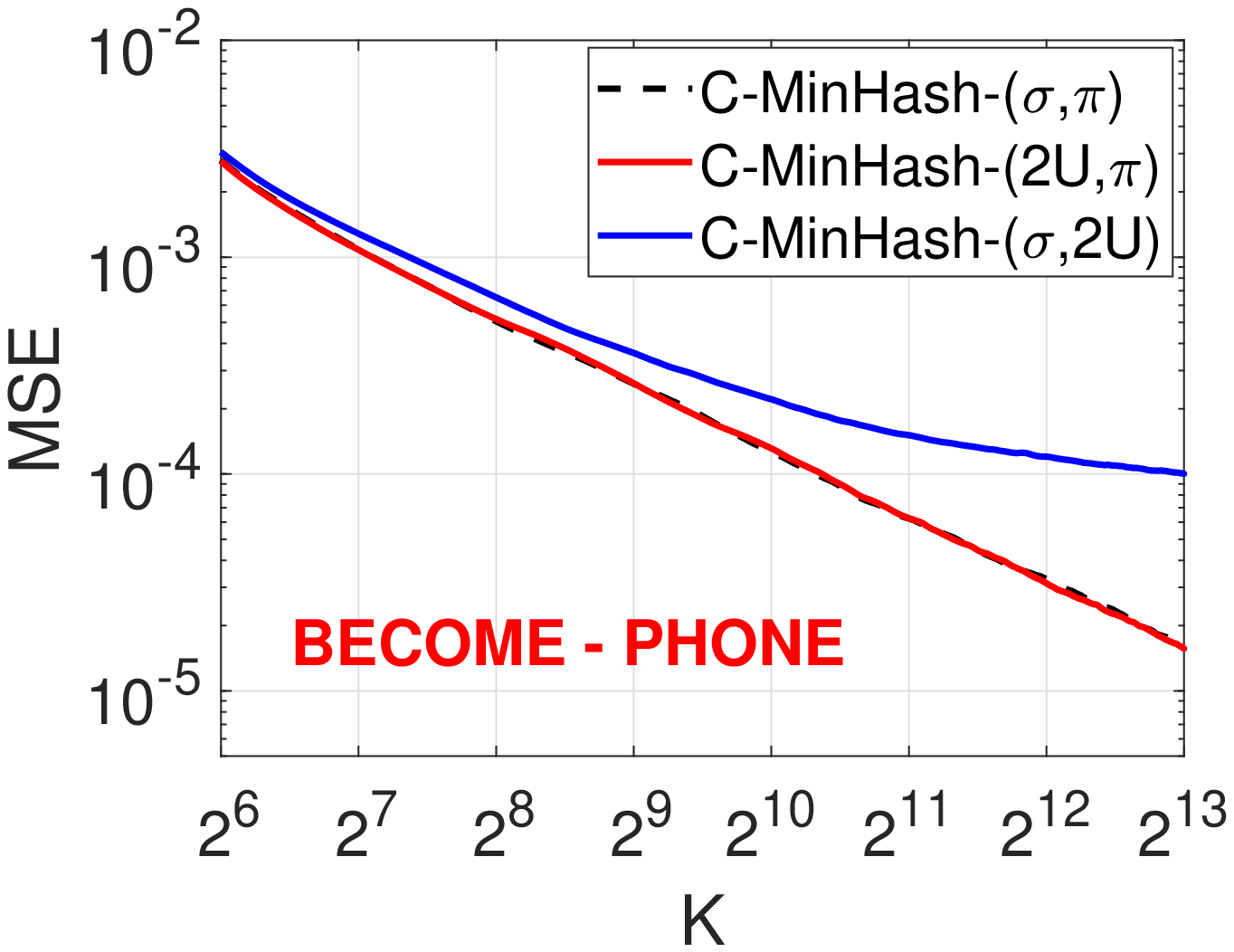}\hspace{-0.1in}
		\includegraphics[width=2.25in]{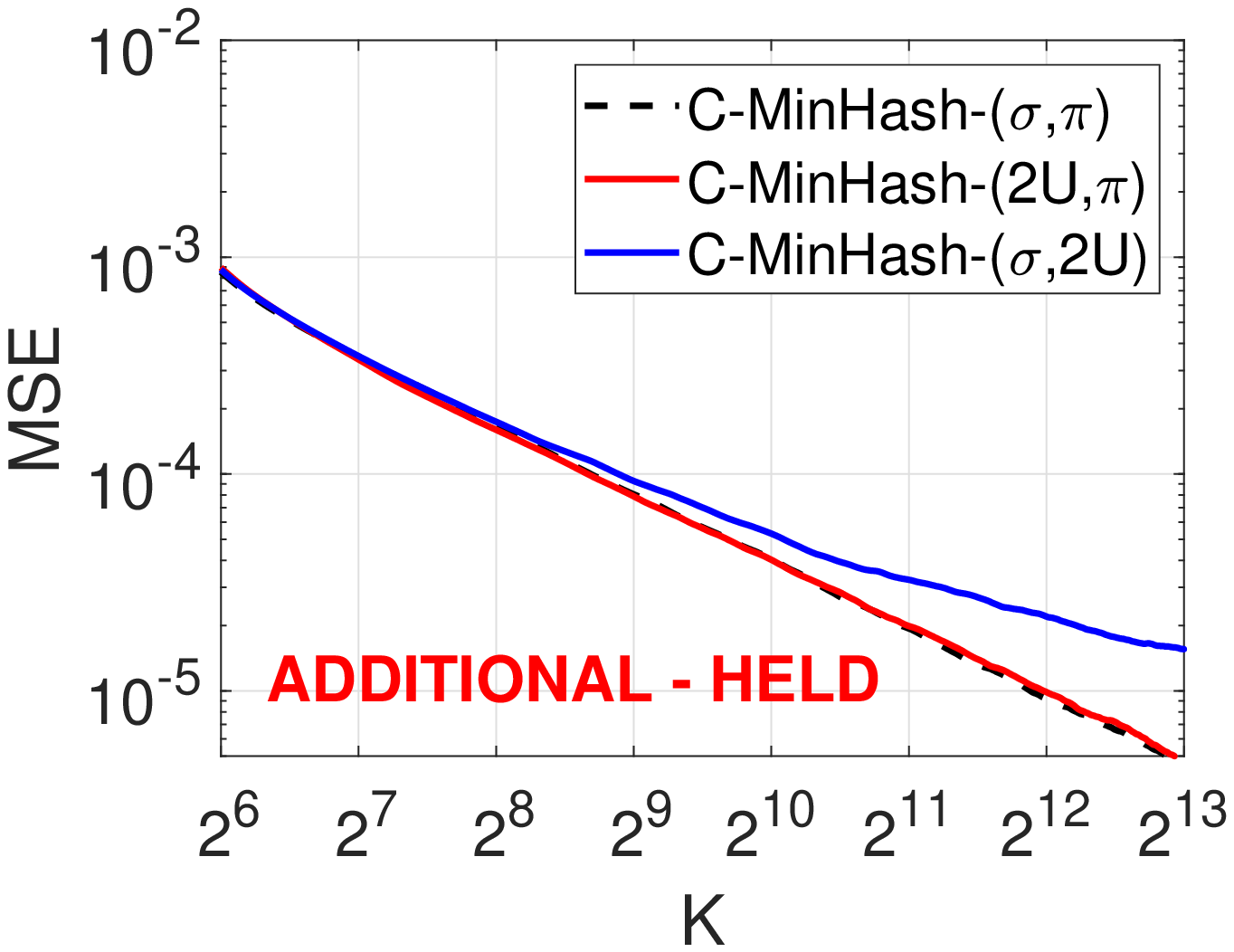}
		}
	\end{center}
	
	\vspace{-0.2in}
	
	\caption{Mean Squared Error (MSE) of C-MinHash-$(\sigma,\pi)$, C-MinHash-$(2U,\pi)$ and C-MinHash-$(\sigma,2U)$ on data pairs from the \textit{Words} dataset~\citep{Proc:Li_Church_EMNLP05}, with various $J$. We see that replacing the initial (pre-processing) permutation $\sigma$ by a 2U hash function (i.e., C-MinHash-$(2U,\pi)$)  gives virtually the same estimation MSE as C-MinHash-$(\sigma,\pi)$. However, replacing the second permutation $\pi$ by 2U (i.e., C-MinHash-$(\sigma,2U)$) leads to larger estimation errors. In each plot, the two vectors from~\citet{Proc:Li_Church_EMNLP05} denote whether the words appear in the documents of the repository. }
	\label{fig:MSE CMH words}
\end{figure}

Here, the 2U hash function is defined by $\mathcal H:[D]\mapsto [D]$ for a non-negative integer $x\in [D]$:     $\mathcal H(x)=ax+b\ \text{mod}\ p$, where $p>D$ is a prime number,  $a$ and $b$ are uniformly chosen from $\{0,1,...,p-1\}$ and $a$ is odd. There are also known tricks to avoid the modulo operations.  As shown in Figure~\ref{fig:MSE CMH words}, for C-MinHash-$(\sigma,\pi)$, we can safely replace by the first permutation $\sigma$ by 2U hash. However, we can observe  obvious performance deviations  once we place the second permutation $\pi$ by 2U hash (i.e., C-MinHash-$(\sigma,2U)$). We have experimented with other simple hashing functions including 4-universal hashing and murmur hashing, but the observations are essentially very similar to using 2U hashing.

\subsection{From One Permutation to ``$1/K$'' Permutation}

Therefore, we naturally develop  C-OPH-$(2U,\pi)$. That is, we can replace the initial permutation $\sigma$ in the original C-OPH-$(\sigma,\pi)$. This means, we actually just need ``$1/K$'' permutation instead of one permutation. This might need to a substantial convenience in pratice. For example, if $D=2^{40}$ and $K=2^{10}$, then we just need one short permutation of size $D/K = 2^{30}$, which is small enough even for GPU memory.

Figure~\ref{fig:MSE COPH words K128} and Figure~\ref{fig:MSE COPH words K512} provide another set of experiments for sanity check: (i) C-OPH-$(\sigma,\pi)$ improves ``ReDen'' the prior best densification~\citep{Proc:Li_NIPS19_BCWS}; (2) The ``$1/K$ permutation version C-OPH-$(2U,\pi)$ performs essentially the same as C-OPH-$(\sigma,\pi)$, using the same ``Words'' dataset~\citep{Proc:Li_Church_EMNLP05}. From the plots with various $K$, we validate again that the proposed C-OPH provides smaller estimation MSEs than ReDen. Furthermore, using hash function for bin split (C-OPH-$(2U,\pi)$) essentially gives same empirical accuracy as the standard C-OPH-$(\sigma,\pi)$.

\begin{figure}[h]
	\begin{center}
		\mbox{\hspace{-0.1in}
		\includegraphics[width=2.25in]{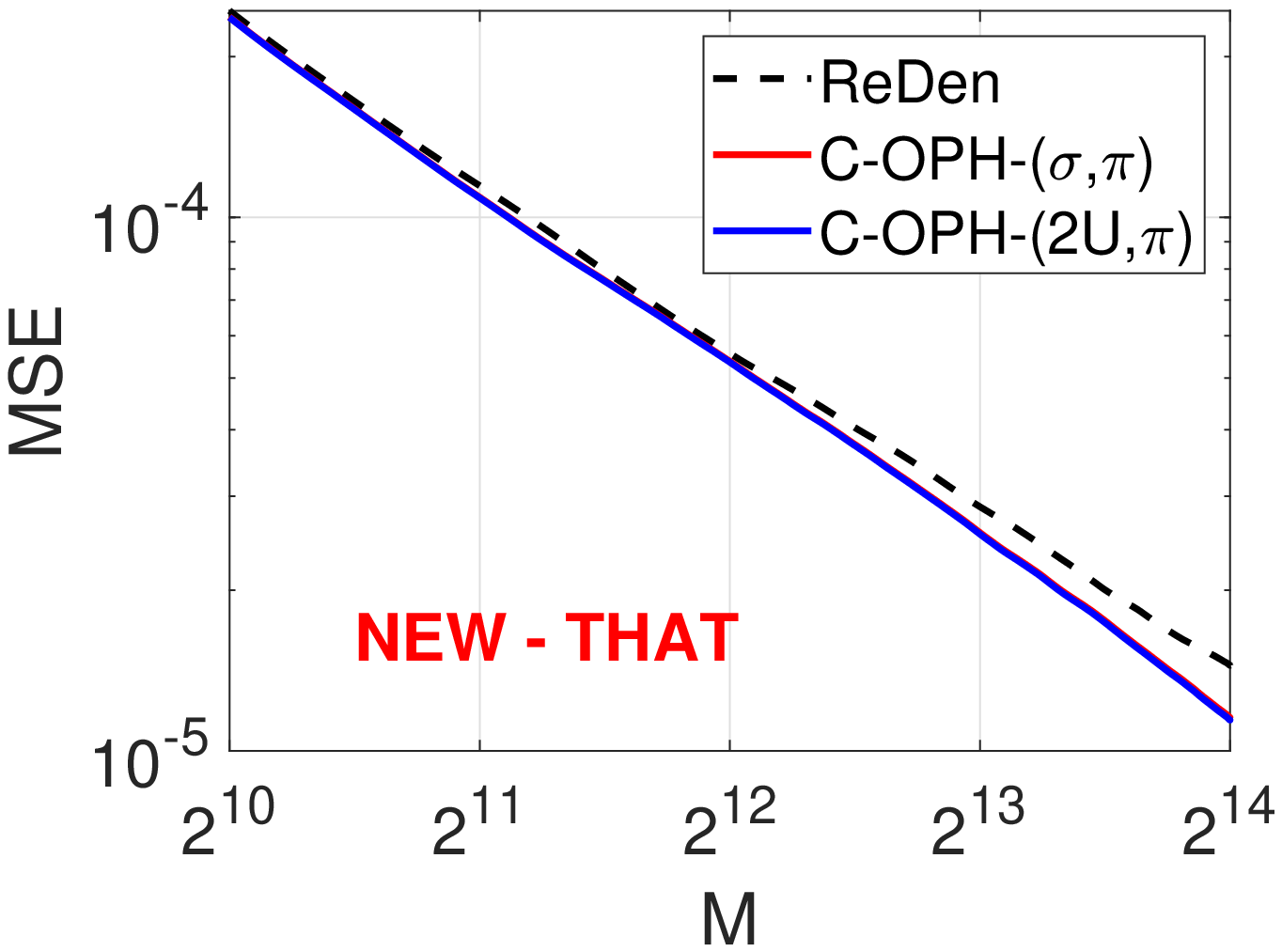}\hspace{0.1in}
		\includegraphics[width=2.25in]{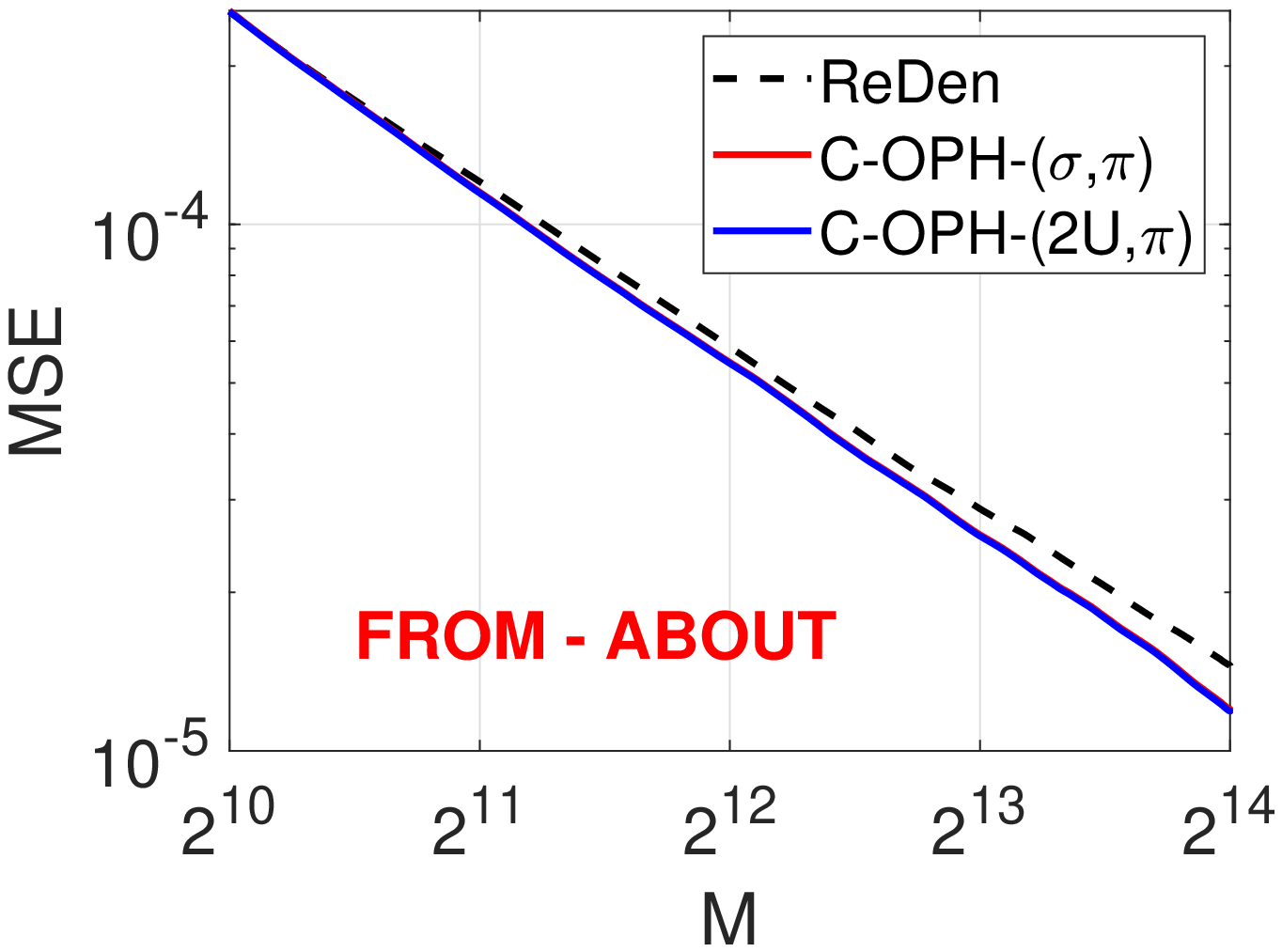}\hspace{0.1in}
		\includegraphics[width=2.25in]{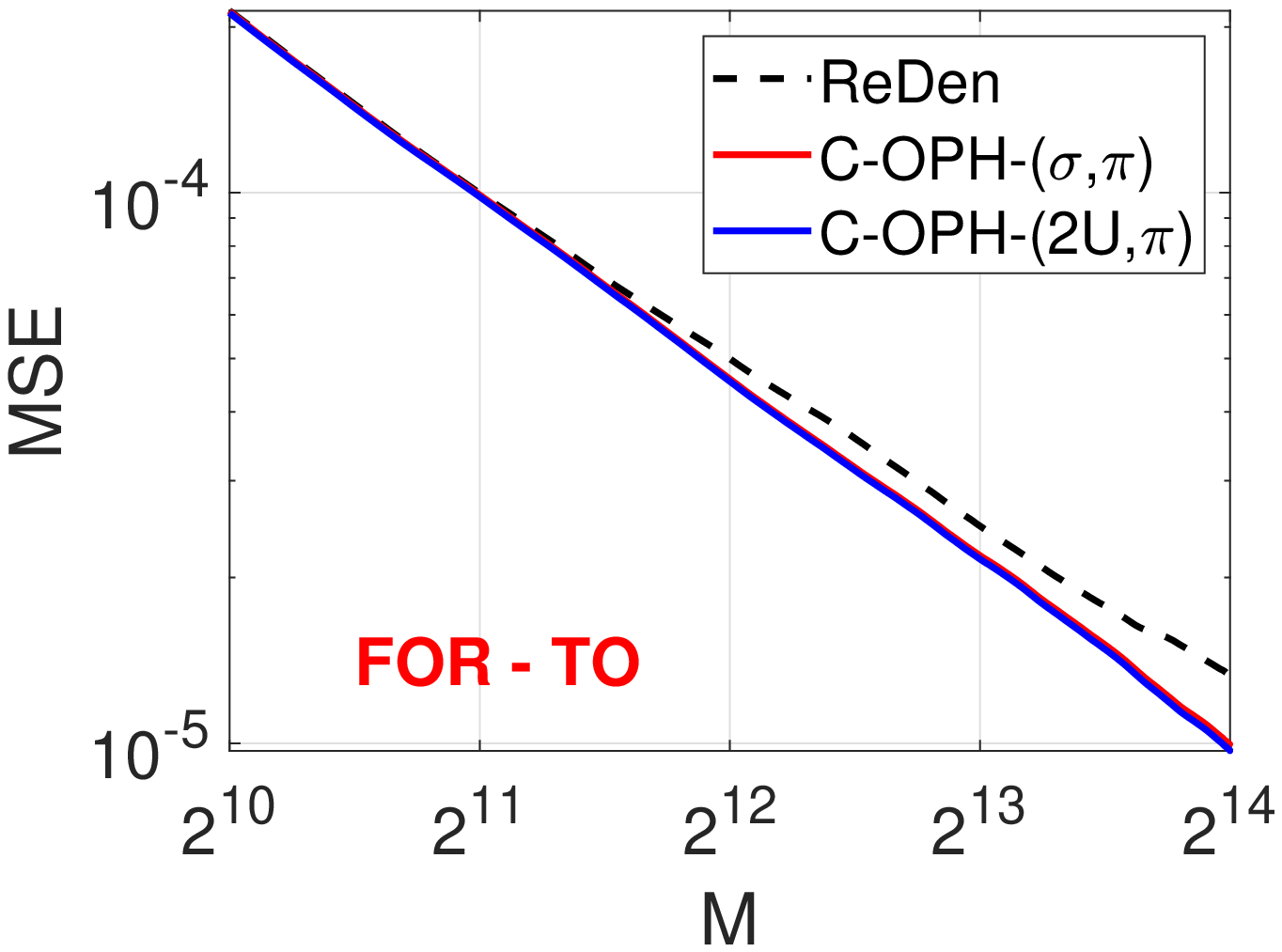}
		}
		\mbox{\hspace{-0.1in}
		\includegraphics[width=2.25in]{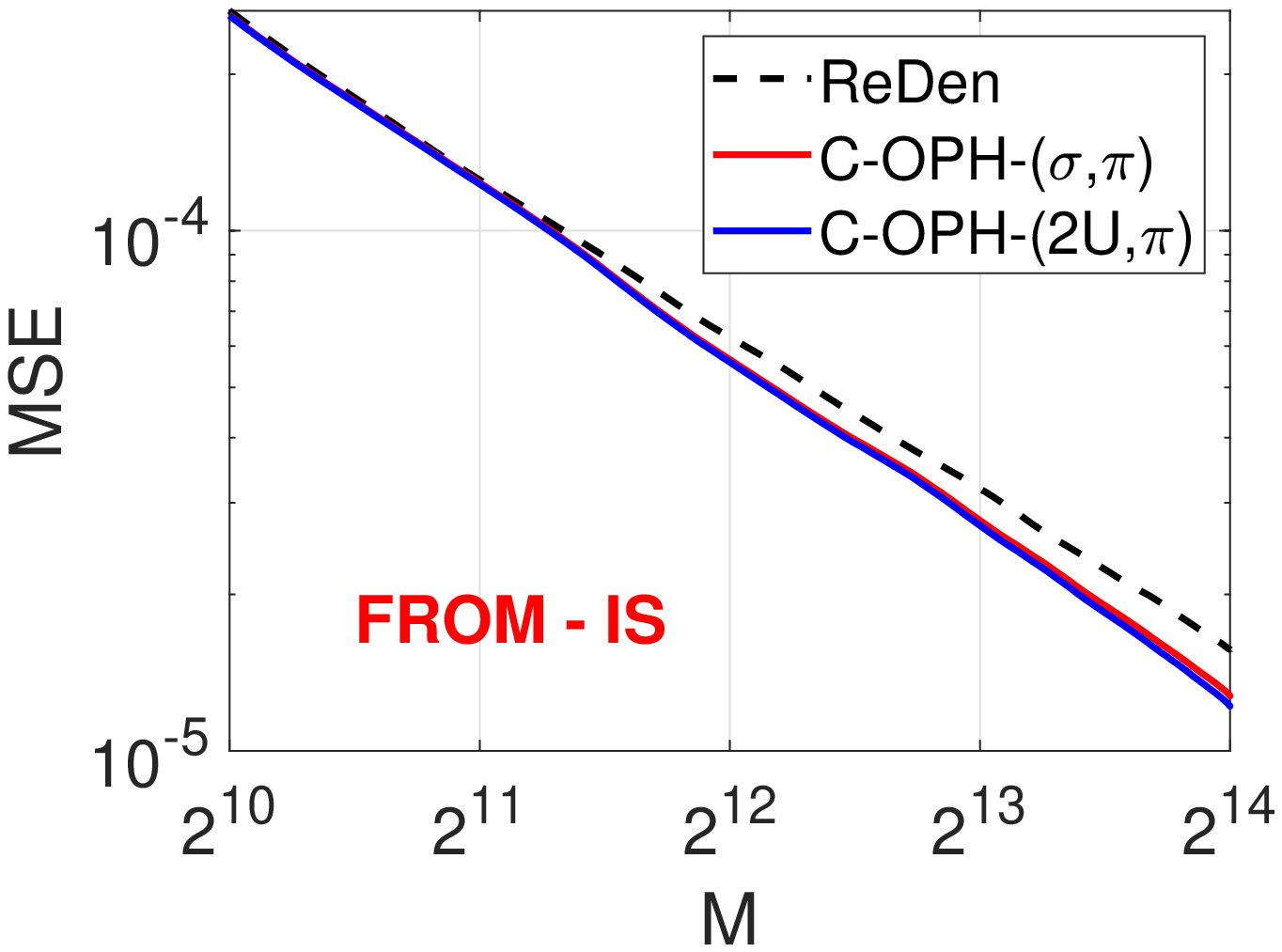}\hspace{0.1in}
		\includegraphics[width=2.25in]{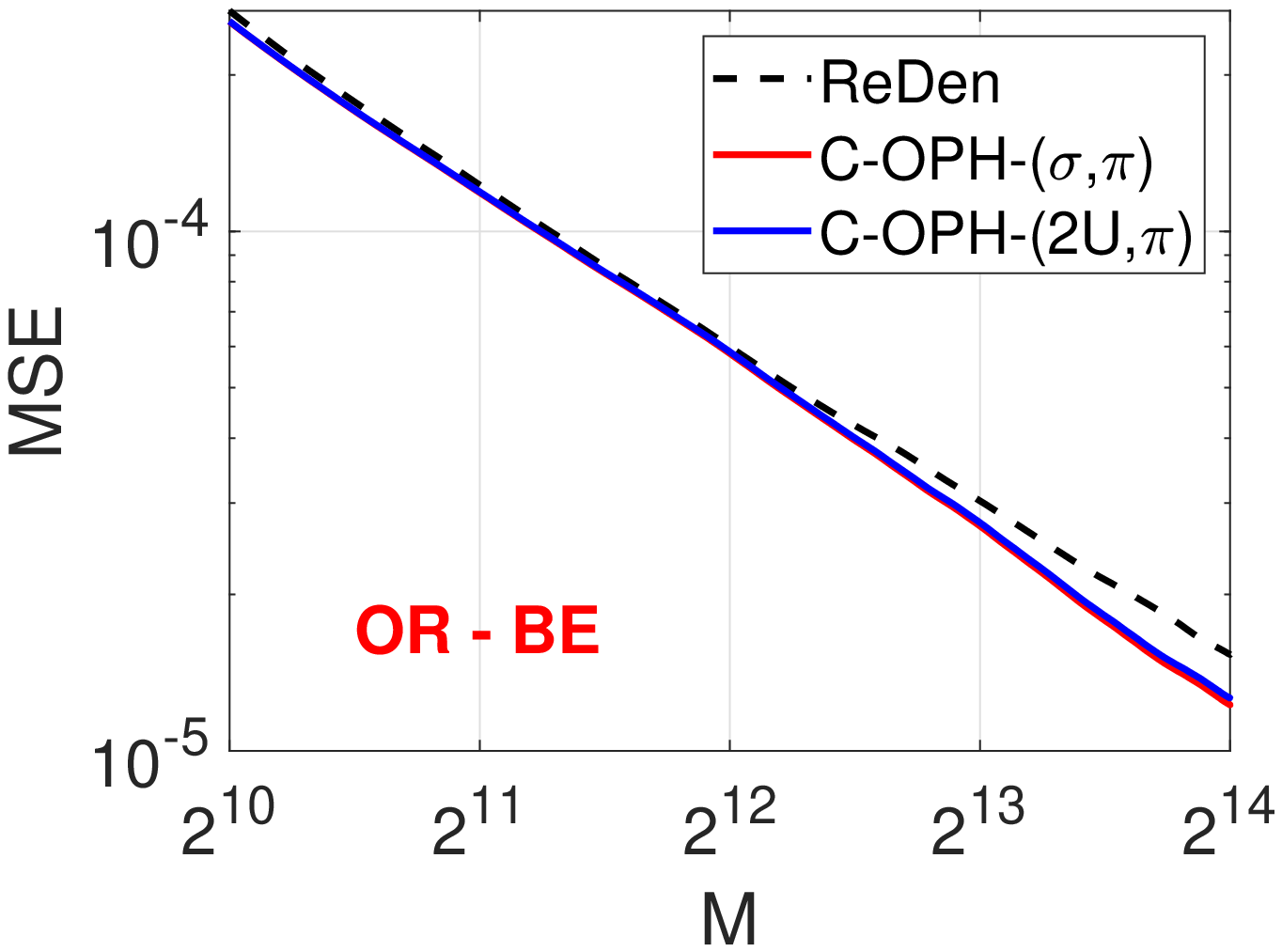}\hspace{0.1in}
		\includegraphics[width=2.25in]{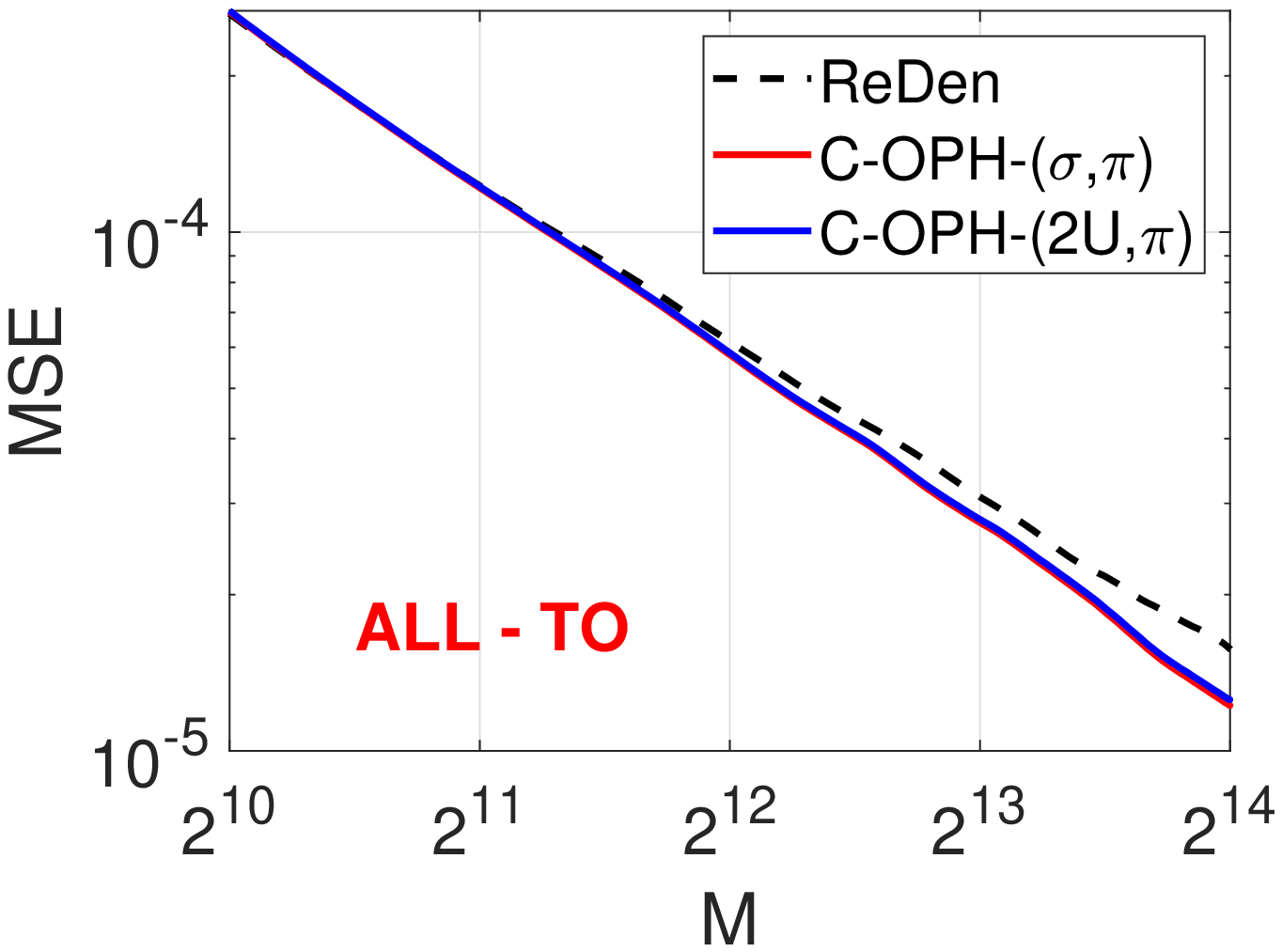}
		}
		\mbox{\hspace{-0.1in}
		\includegraphics[width=2.25in]{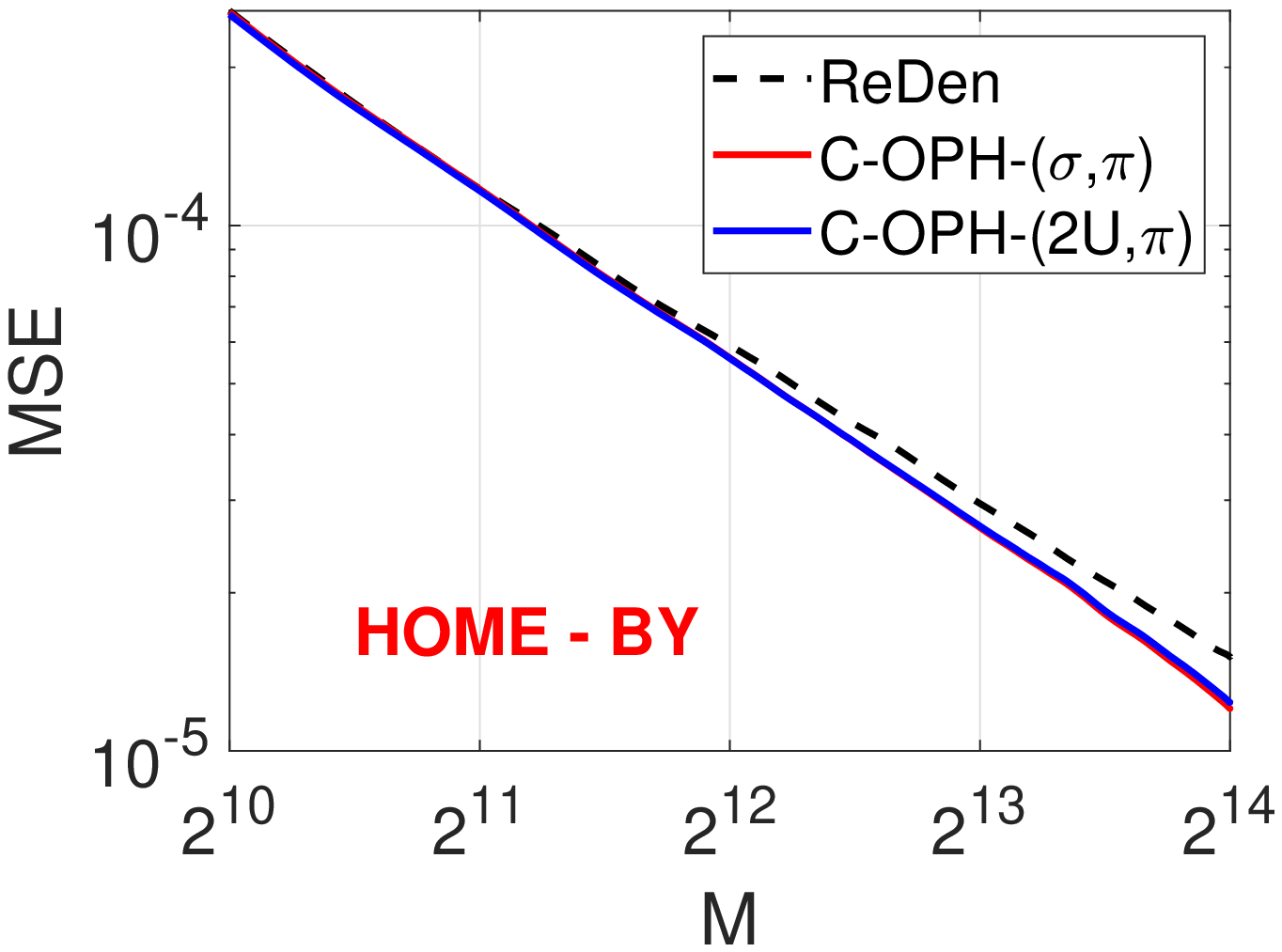}\hspace{0.1in}
		\includegraphics[width=2.25in]{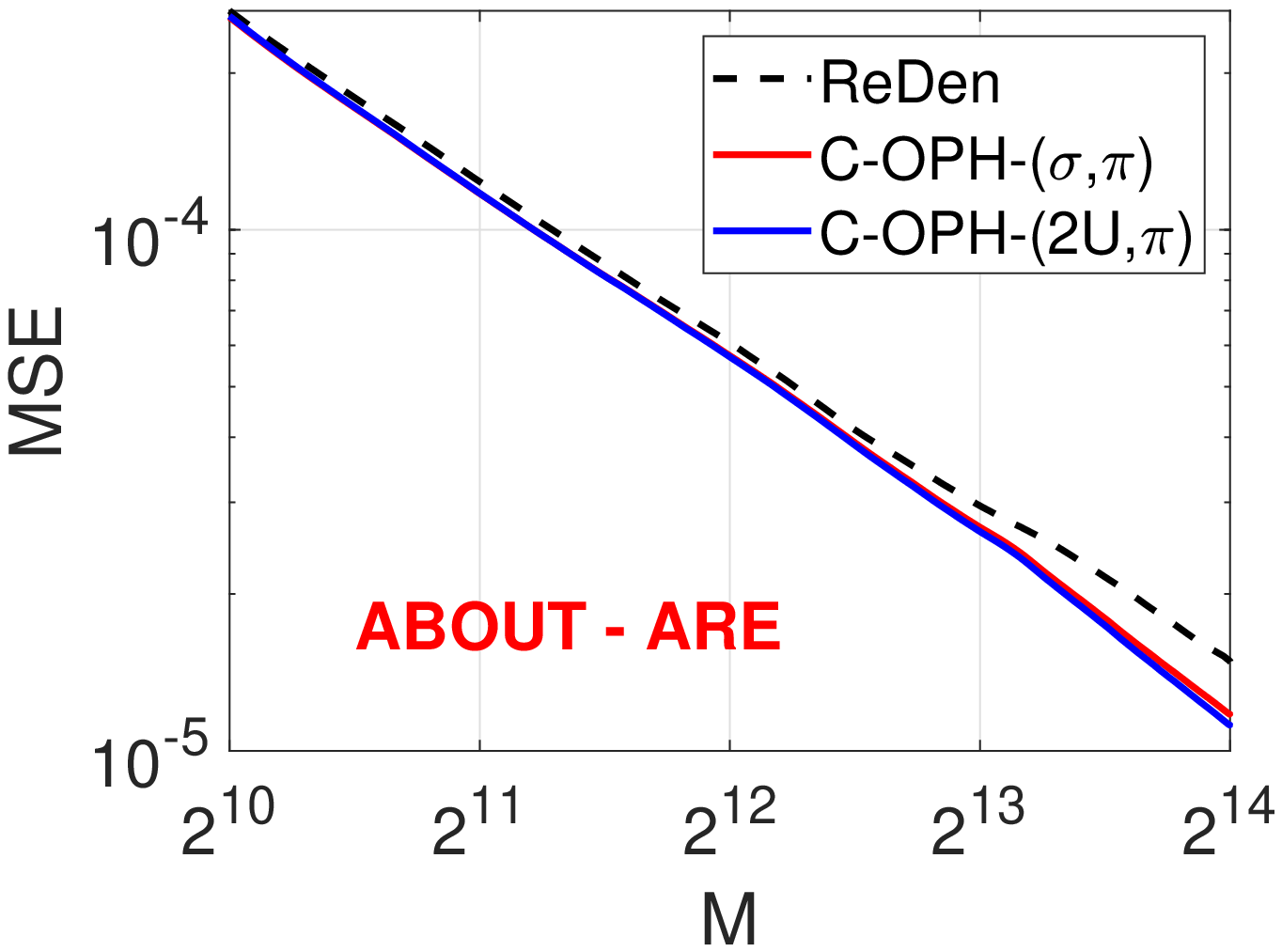}\hspace{0.1in}
		\includegraphics[width=2.25in]{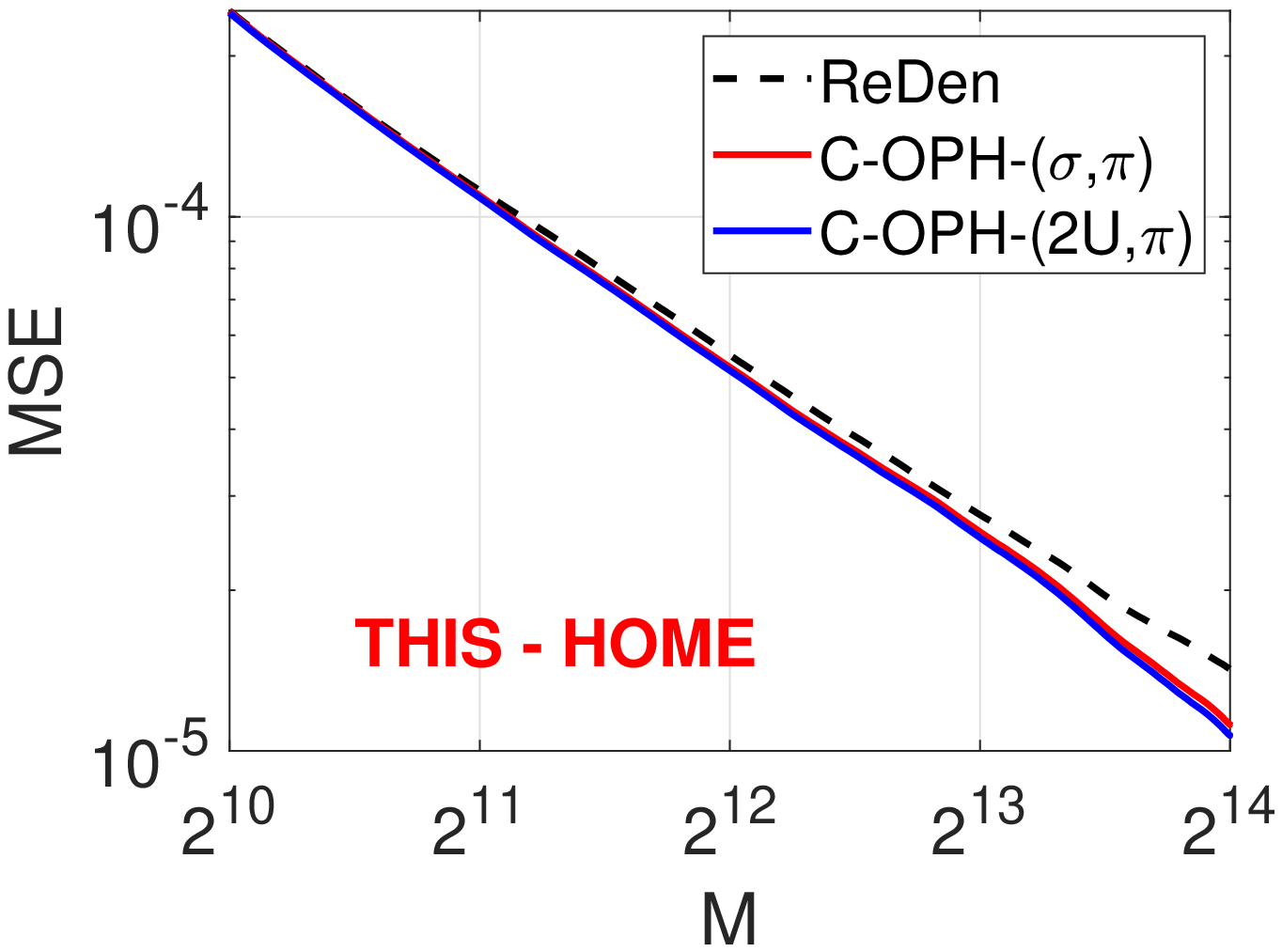}
		}
	\end{center}
	\vspace{-0.1in}
	\caption{Mean Squared Error (MSE) of ReDen, C-OPH-$(\sigma,\pi)$ and C-OPH-$(2U,\pi)$ on word pairs from the \textit{Words} dataset, $K=2^7$. We see that C-OPH improves the MSE of ReDen, and using hash function to perform bin split empirically gives same MSE as using perfectly random permutation.}
	\label{fig:MSE COPH words K128}\
\end{figure}

\newpage\clearpage

\begin{figure}[t!]
	\begin{center}
		\mbox{\hspace{-0.1in}
		\includegraphics[width=2.25in]{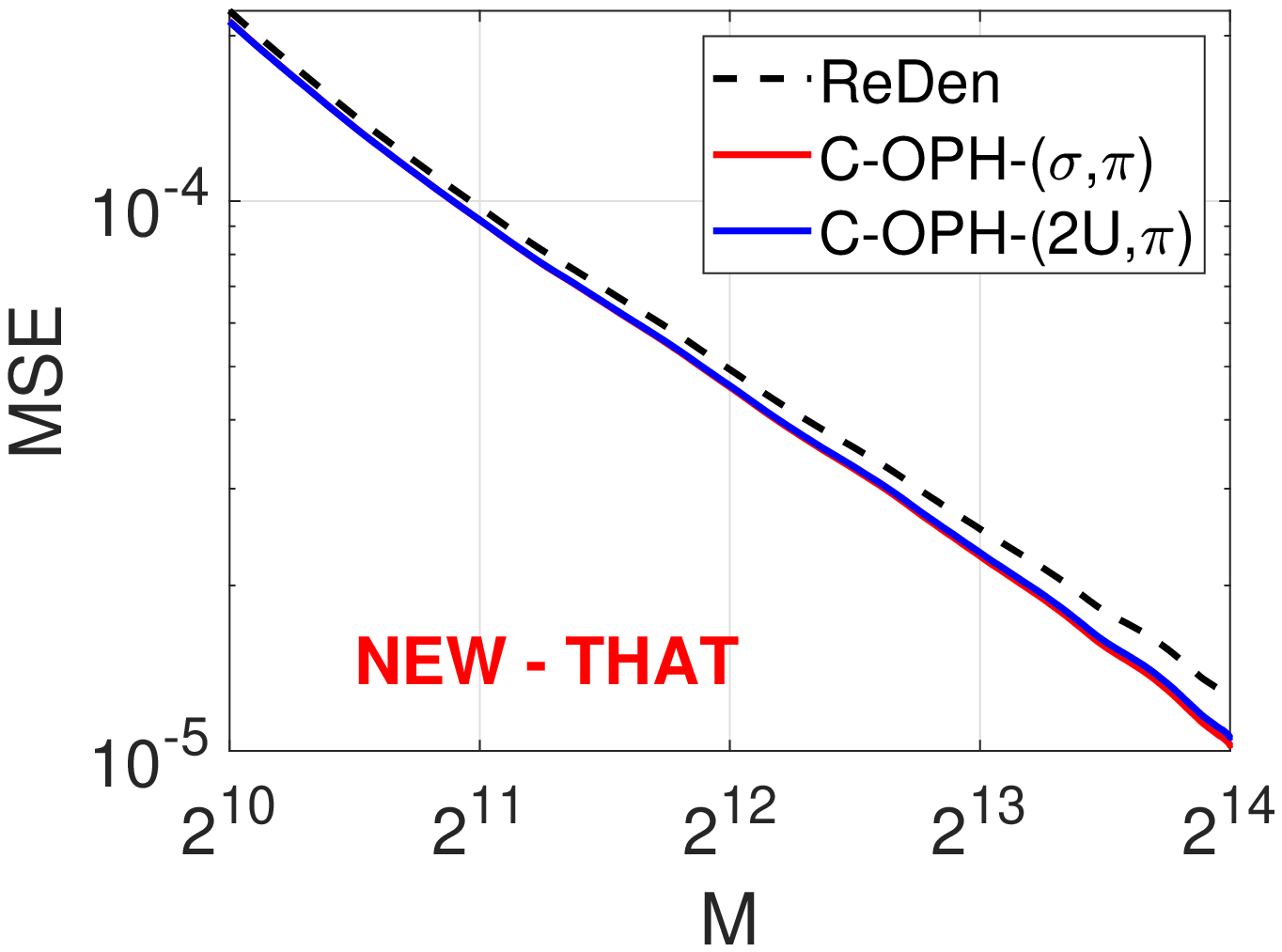}\hspace{-0.1in}
		\includegraphics[width=2.25in]{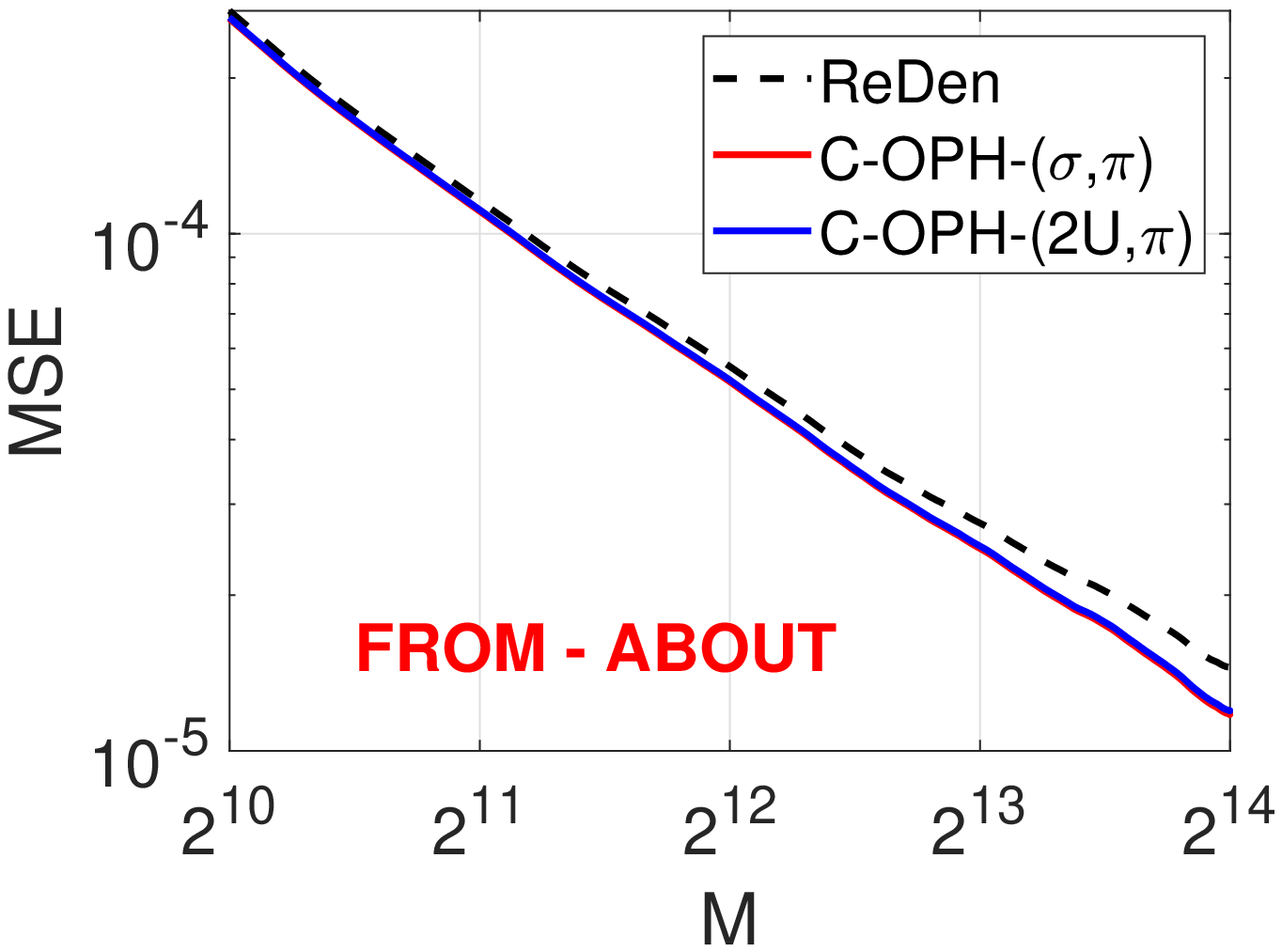}\hspace{-0.1in}
		\includegraphics[width=2.25in]{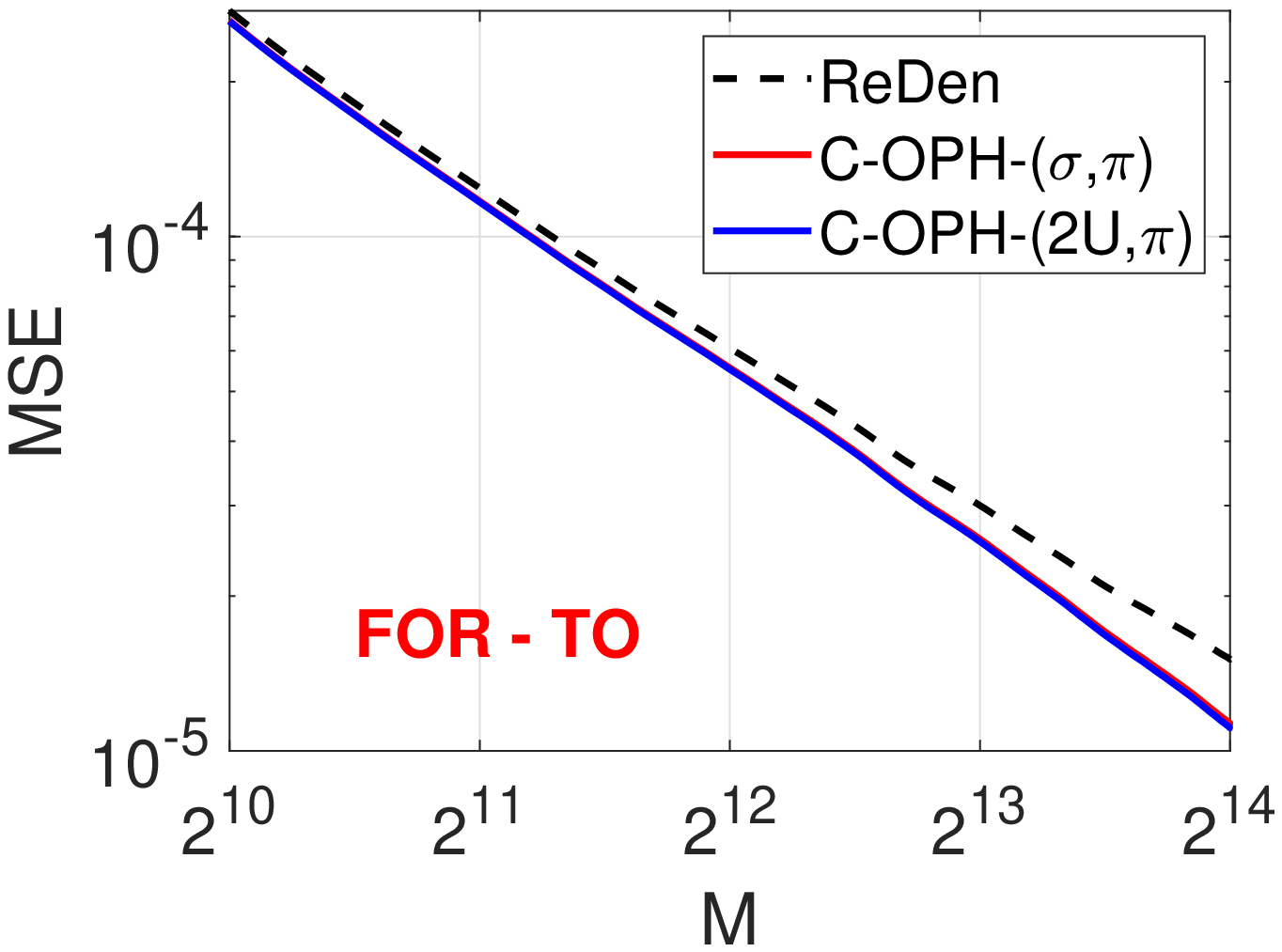}
		}
		\mbox{\hspace{-0.1in}
		\includegraphics[width=2.25in]{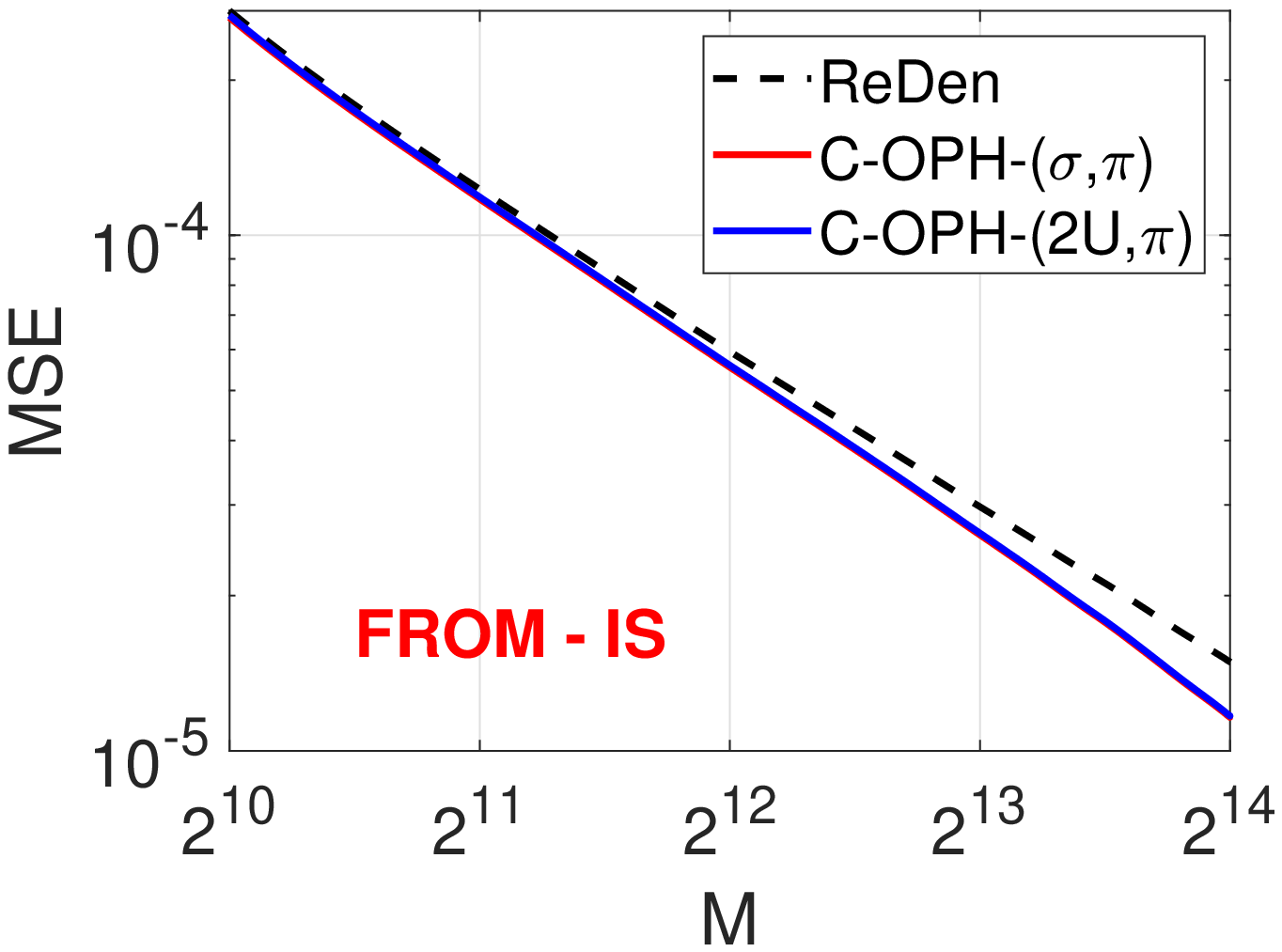}\hspace{-0.1in}
		\includegraphics[width=2.25in]{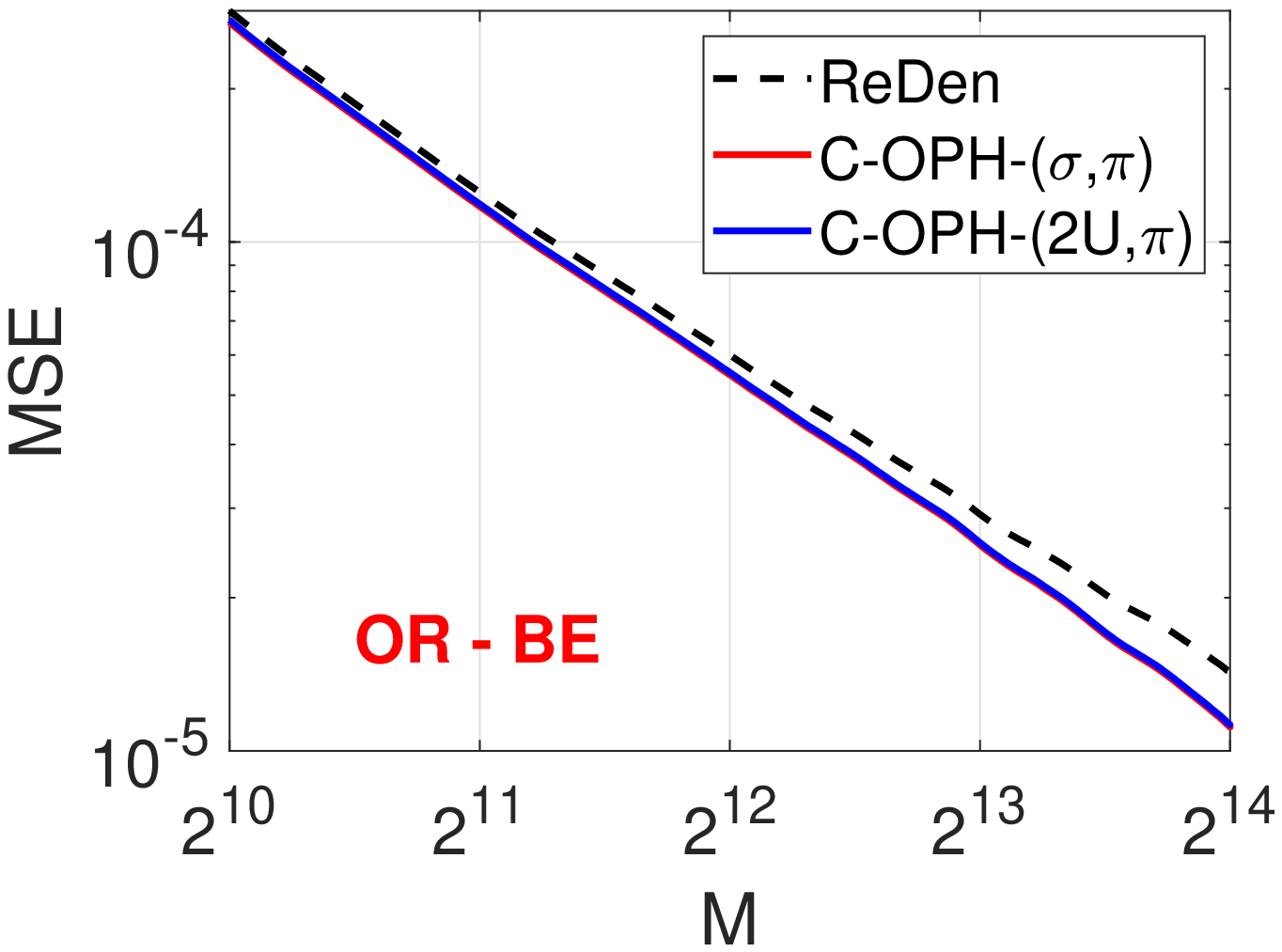}\hspace{-0.1in}
		\includegraphics[width=2.25in]{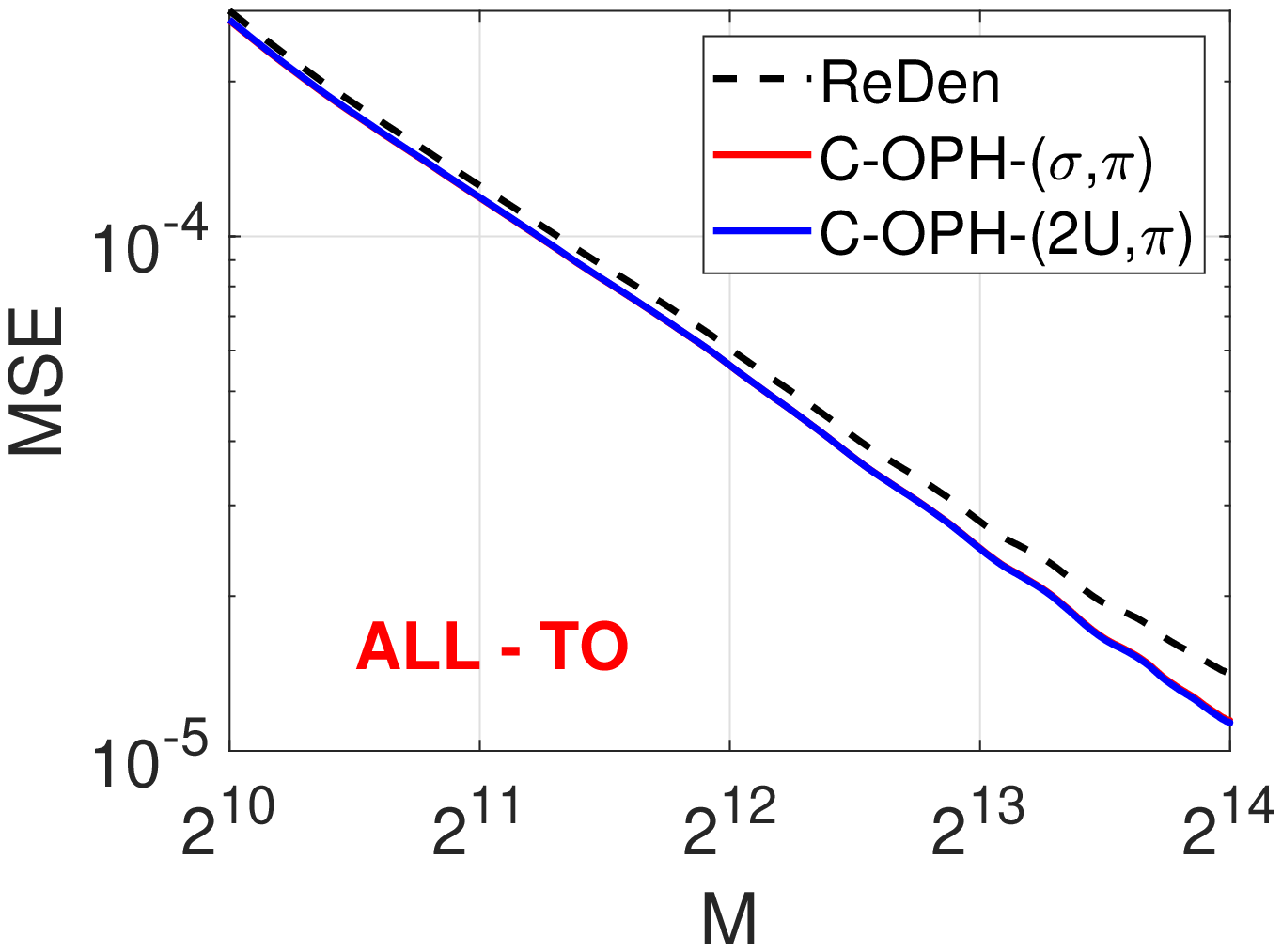}
		}
		\mbox{\hspace{-0.1in}
		\includegraphics[width=2.25in]{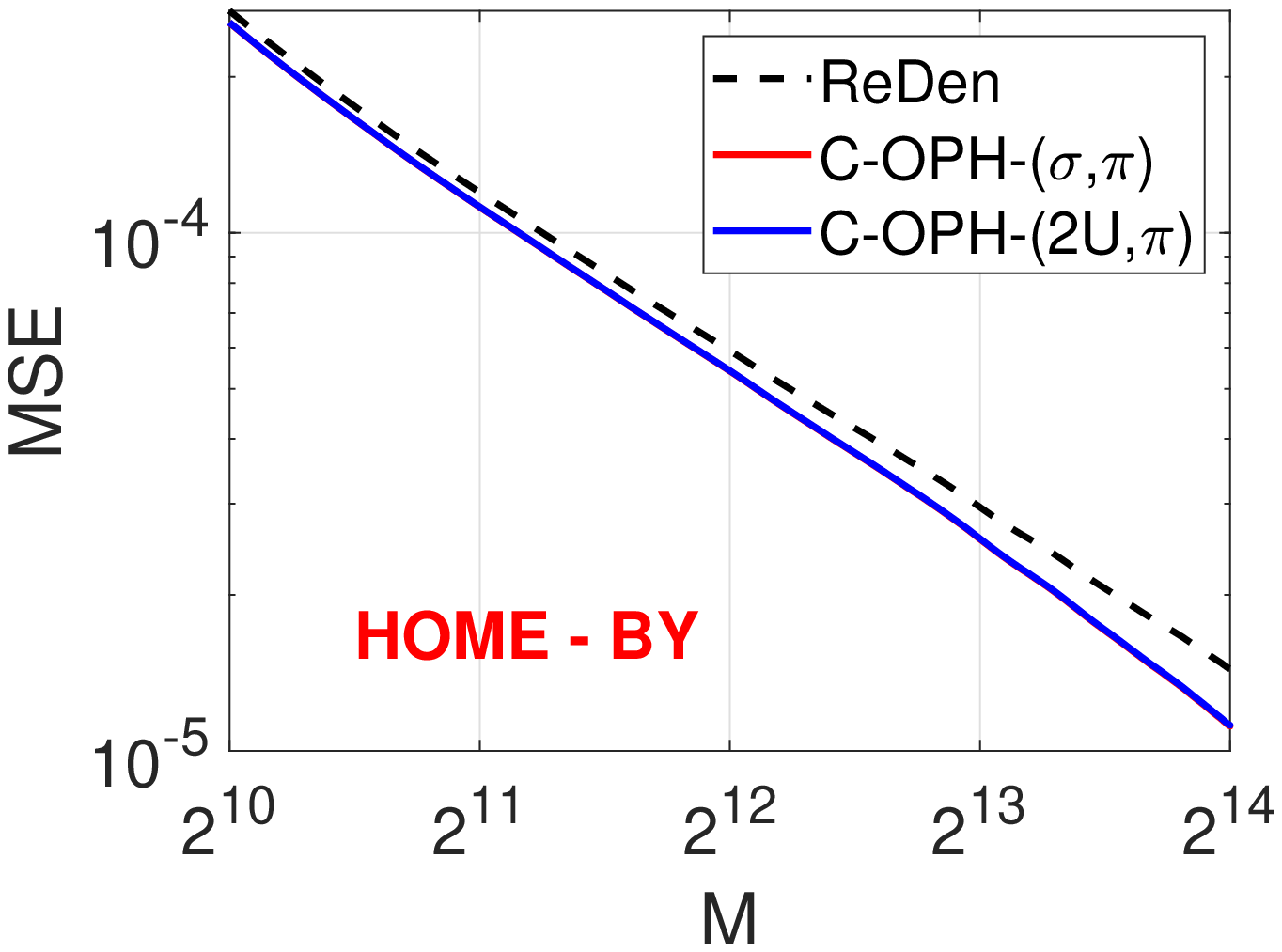}\hspace{-0.1in}
		\includegraphics[width=2.25in]{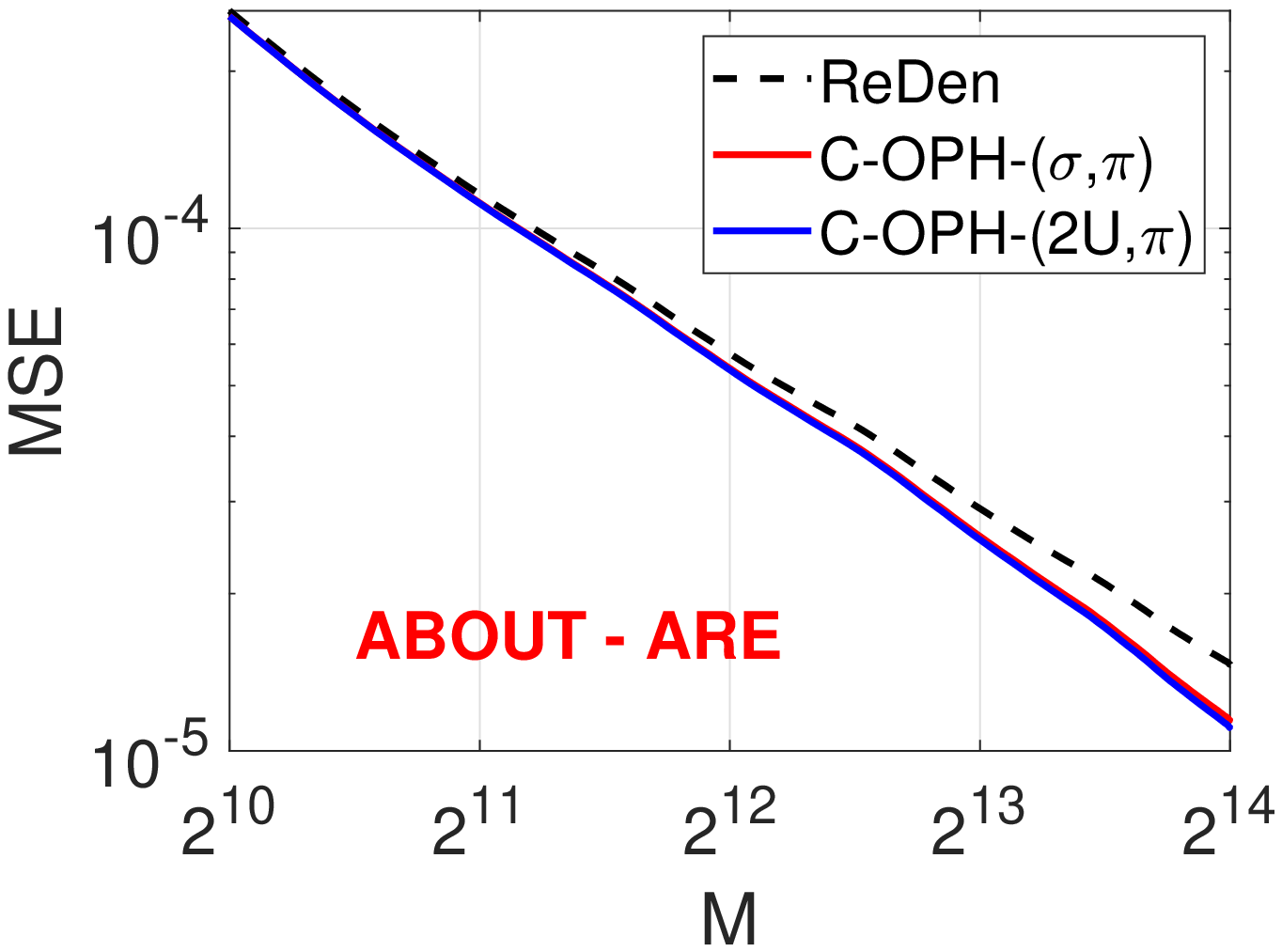}\hspace{-0.1in}
		\includegraphics[width=2.25in]{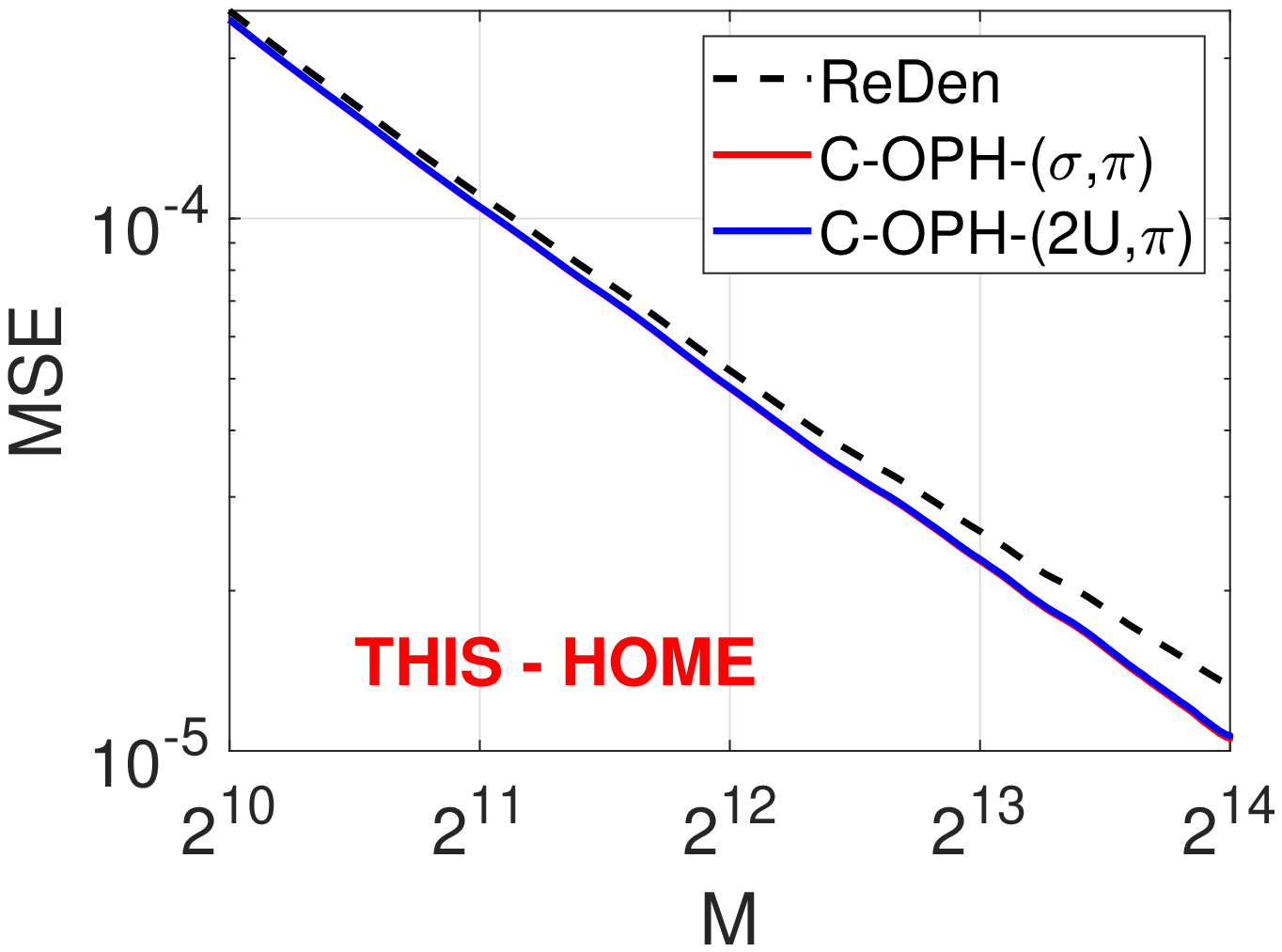}
		}
	\end{center}
	\vspace{-0.1in}
	\caption{Mean Squared Error (MSE) of ReDen, C-OPH-$(\sigma,\pi)$ and C-OPH-$(2U,\pi)$ on word pairs from the \textit{Words} dataset, $K=2^9$. We see that C-OPH improves the MSE of ReDen, and using hash function to perform bin split empirically gives same MSE as using perfectly random permutation.}
	\label{fig:MSE COPH words K512}
\end{figure}
\section{Conclusion}

The popular minwise hashing (MinHash) method has been improved by the so-called Circulant MinHash (C-MinHash)~\citep{CMH2Perm2021,CMH1Perm2021}. To generate $K$ hash values, C-MinHash only needs two permutations or even just one permutation, instead of using $K$ independent permutations as required by the standard MinHash. It is clear that C-MinHash is different from the previous known work on ``One Permutation Hashing''  (OPH)~\citep{Proc:Li_Owen_Zhang_NIPS12} and its variants (due to different ``densification'' schemes).  In this paper, by incorporating the central ideas of circulant permutations, we propose Circulant OPH (C-OPH) to improve the accuracy of the state-of-the-art densified OPH, i.e., the ``ReDen'' method developed in~\citet{Proc:Li_NIPS19_BCWS}.

\vspace{0.15in}

\noindent The basic idea of Circulant OPH (C-OPH) is to first randomly divide the data vectors into equal-sized $K$ bins, then use a smaller permutation (of size $D/K$, where $D$ is the data dimension) to generate the required number of hash values in a circulant  manner. We show that the proposed C-OPH achieves a smaller estimation variance  than ``ReDen'', the previous best (most accurate) densified OPH algorithm.  In addition to achieving improved estimation accuracy, another interesting benefit of C-OPH is that, practically speaking, C-OPH just needs ``$1/K$'' permutation instead of one permutation. This consequence would be useful in practice. For example, consider a dataset with $D = 2^{40}$ and  $K = 2^{10}$. Then we just need a small permutation of size $D/K = 2^{30}$ for this high-dimensional dataset, and $2^{30}$ is small  even for the GPU memory.

\bibliography{standard}
\bibliographystyle{plainnat}

\end{document}